\def\eps{\ve}
\renewcommand{\epsilon}{\ve}
\def\ve{\varepsilon}
\newcommand{\Exp}{\E}
\newcommand{\E}{\mbox{\bf E}}
\newcommand{\pr}[2][]{\mbox{Pr}\ifthenelse{\not\equal{}{#1}}{_{#1}}{}\!\left[#2\right]}
\newtheorem{theorem}{Theorem}
\newtheorem{remark}{Remark}
\newtheorem{lemma}{Lemma}
\newtheorem{definition}{Definition}
\newcommand{\ignore}[1]{}
\providecommand{\poly}{\operatorname*{poly}}
\newcommand{\bg}[1]{\medskip\noindent{\bf #1}}
\definecolor{Red}{rgb}{1,0,0}
\newcommand{\oldbound}[1]{{}}
\newcommand{\Loss}{\text{Loss}}
\newcommand{\Unif}{\text{Uniform}}
\newcommand{\Ind}{\mathbb{I}}
\newcommand{\Supp}{\text{supp}}
\newcommand{\Inv}{\textsc{Inv}}
\title{Actively Avoiding Nonsense in Generative Models}
\author {
Steve Hanneke \\
Princeton, NJ\\
\tt{steve.hanneke@gmail.com}
\and 
Adam Kalai \\
Microsoft Research, New England \\
\tt{adum@microsoft.com}
\and
Gautam Kamath \thanks{Supported by ONR N00014-12-1-0999, NSF CCF-1617730, CCF-1650733, and CCF-1741137. Work partially done while author was an intern at Microsoft Research, New England.}\\
EECS \& CSAIL, MIT \\
\tt{g@csail.mit.edu}
\and
Christos Tzamos \\
Microsoft Research, New England \\
\tt{chtzamos@microsoft.com}
}
\begin{document}

\maketitle

\begin{abstract}
A generative model may generate utter nonsense when it is fit to maximize the likelihood of observed data. This happens due to ``model error,'' i.e., when the true data generating distribution does not fit within the class of generative models being learned. To address this, we propose a model of active distribution learning using a binary invalidity oracle that identifies some examples as clearly invalid, together with random positive examples sampled from the true distribution. The goal is to maximize the likelihood of the positive examples subject to the constraint of (almost) never generating examples labeled invalid by the oracle. Guarantees are agnostic compared to a class of probability distributions. We show that, while proper learning often requires exponentially many queries to the invalidity oracle, improper distribution learning can be done using polynomially many queries. 
\end{abstract}

\section{Introduction}
\label{sec:intro}
Generative models are often trained in an unsupervised fashion, fitting a model $q$ to a set of observed data $x_P \subseteq X$ drawn iid from some true distribution $p$ on $x\in X$. Now, of course $p$ may not exactly belong to family $Q$ of probability distributions being fit, whether $Q$ consists of Gaussians mixture models, Markov models, or even neural networks of bounded size. We first discuss the limitations of generative modeling without feedback, and then discuss our model and results.

Consider fitting a generative model on a text corpus consisting partly of poetry written by four-year-olds and partly of mathematical publications from the {\em Annals of Mathematics}. Suppose that learning to generate a poem that looks like it was written by a child was easier than learning to generate a novel mathematical article with a correct, nontrivial statement. If the generative model pays a high price for generating unrealistic examples, then it may be better off learning to generate children's poetry than mathematical publications. However, without negative feedback, it may be difficult for a neural network or any other model to know that the mathematical articles it is generating are stylistically similar to the mathematical publications but do not contain valid proofs.\footnote{This is excluding clearly fake articles published without proper review in lower-tier venues~\cite{LabbeL13}.} 

As a simpler example, the classic Markovian ``trigram model'' of natural language assigns each word a fixed probability conditioned only on the previous two words. Prior to recent advances in deep learning, for decades the trigram model and its variant were the workhorses of language modeling, assigning much greater likelihood to natural language corpora than numerous linguistically motivated grammars and other attempts~\cite{Rosenfeld00}. However, text sampled from a trigram is typically nonsensical, e.g., the following text was randomly generated from a trigram model fit on a corpus of text from the Wall Street Journal~\cite{JurafskyM09}:
\begin{quote}
They also point to ninety nine point six billion dollars from two hundred
four oh six three percent of the rates of interest stores as Mexico and
gram Brazil on market conditions. 
\end{quote}

In some applications, like text compression using a language model~\cite{WittenNC87}, maximizing likelihood is equivalent to optimizing compression. However, in many  applications involving generation, such nonsense is costly and unacceptable. Now, of course it is possible to always generate valid data by returning random training examples, but this is simply overfitting and not learning. Alternatively, one could incorporate human-in-the-loop feedback such as through crowdsourcing, into the generative model to determine what is a valid, plausible sentence.

In some domains, validity could be determined automatically. Consider a Markovian model of a well-defined concept such as mathematical formulas that compile in \LaTeX{}. Now, consider a $n$-gram Markovian character model which the probability of each subsequent character is determined by the previous $n$ characters. For instance, the expression \$\{2+\{x-y\}\$ is invalid in \LaTeX{} due to mismatched braces. For this problem, a \LaTeX{} compiler may serve as a validity oracle. Various $n$-gram models can be fit which only generate valid formulas. To address mismatched braces, for example, one such model would ensure that it always closed braces within $n$ characters of opening, and had no nested braces. While an $n$-gram model will not perfectly model the true distribution over valid \LaTeX{} formulas, for certain generative purposes one may prefer an $n$-gram model that generates valid formulas over one that assigns greater likelihood to the training data but generates invalid formulas. 

Figure \ref{fig:rectangle} illustrates a simple case of learning a rectangle model for data which is not uniform over a rectangle. A maximum likelihood model would necessarily be the smallest rectangle containing all the data, but most examples generated from this distribution may be invalid. Instead a smaller rectangle, as illustrated in the figure, may be desired.

\begin{figure}[h]\label{fig:rectangle}
  \centering
\includegraphics[width=7cm]{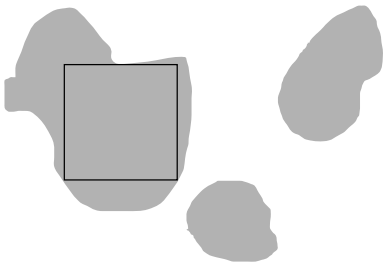}
\caption{Example where the underlying distribution $p$ is uniform over the valid region, shaded in gray. The best valid rectangle corresponding to $q^*$ is outlined on top. }
\end{figure}

Motivated by these observations, we evaluate a generative model $q$ on two axes. First is {\em coverage}, which is related to the probability assigned to future examples drawn from the true distribution $p$. Second is {\em validity}, defined as the probability that random examples generated from $q$ meet some validity requirement. Formally, we measure coverage in terms of a bounded {\em loss}:
$$\Loss(p,q)=\E_{x \sim p}[L(q_x)],$$
where $L:[0,1]\rightarrow [0,M]$ is a bounded decreasing function such as the capped log-loss $L(q_x)=\min(M, \log 1/q_x)$. 
A bounded loss has the advantages of being efficiently estimable, and also it enables a model to assign 0 probability to one example (e.g., an outlier or error) if it greatly increases the likelihood of all other data. Validity is defined with respect to a set $V \subseteq X$, and $q(V)$ is the probability that a random example generated from $q$ lies within $V$. 

Clearly, there is a tradeoff between coverage and validity. We first focus on the case of (near) perfect validity. A Valid Generative Modeling (VGM) algorithm if it outputs, for a family of distributions $Q$ over $X$, if it outputs $\hat{q}$ with (nearly) perfect validity and whose loss is nearly as good as the loss of the best valid $q\in Q$. More precisely, $A$ is a VGM learner of $Q$ if for any nonempty valid subset $V \subseteq X$, any probability distribution $p$ over $V$, and any $\eps>0$, $A$ uses $n$ random samples from $p$ and makes $m$ membership oracle calls to $V$ and outputs a distribution $\hat{q}$ such that, $$\Loss(p, \hat{q}) \leq \min_{q \in Q: q(V)=1}\Loss(p,q) + \eps ~\text{ and }~q(V)\geq 1-\eps.$$ 
We aim for our learner to be sample and query efficient, requiring that $n$ and $m$ are polynomial in $M, 1/\eps$ and a measure of complexity of our distribution class $Q$.
Furthermore, we would like our algorithms to be computationally efficient, with a runtime polynomial in the size of the data, namely the $n + m$ training examples. 
A more formal description of the problem is available in Section~\ref{sec:problem}.

$A$ is said to be {\em proper} if it always outputs $\hat{q}\in Q$ and {\em improper} otherwise.
In Section~\ref{sec:impossibility}, we first show that efficient proper learning for VGM is impossible. This is an information-theoretic result, meaning that even given infinite runtime and positive samples, one still cannot solve the VGM problem. Interestingly, this is different from binary classification, where it is possible to statistically learn from iid examples without a membership oracle.

Our first main positive result is an efficient (improper) learner for VGM. The algorithm relies on a subroutine that solves the following {\em Generative Modeling with Negatives} (GMN) problem: given sets $X_P, X_N \subset X$ of positive and negative examples, find the probability distribution $q \in Q$ which minimizes $\sum_{x \in X_P} L(q(x))$ subject to the constraint that $q(X_N)=0$. For simplicity, we present our algorithm for the case that the distribution family $Q$ is finite, giving sample and query complexity bounds that are logarithmic in terms of $|Q|$. However, as we show in Section~\ref{sec:infinite-families}, all of our results extend to infinite families $Q$. It follows that if one has a computationally efficient algorithm for the GMN problem for a distribution family $Q$, then our reduction gives a computationally efficient VGM learning algorithm for $Q$.

Our second positive result is an algorithm that minimizes $\Loss(p,q)$ subject to a relaxed validity constraint comparing against the optimal distribution that has validity $q(V)$ at least $1-\alpha$ for some $\alpha>0$. We show in Section~\ref{sec:partial-validity} that even in this more general setting, it is possible to obtain an algorithm that is statistically efficient but may not be computationally efficient. An important open question is whether there exists a computationally efficient algorithm for this problem when given access to an optimization oracle, as was the case for our algorithm for VGM.

\subsection{Related Work}
\cite{KearnsMRRSS94} showed how to learn distributions from positive examples in the realizable setting, i.e., where the true distribution is assumed to belong to the class being learned. In the same sense as their work is similar to PAC learning~\cite{Valiant84} of distributions, our work is like agnostic learning~\cite{KearnsSS94} in which no assumption on the true distribution is made. 

Generative Adversarial Networks (GANs)~\cite{GoodfellowPMXWOCB14} are an approach for generative modeling from positive examples alone, in which a generative model is trained against a discriminator that aims to distinguish real data from generated data. In some domains, GANs have been shown to outperform other methods at generating realistic-looking examples. Several shortcomings of GANs have been observed~\cite{AroraRZ18}, and GANs are still subject to the theoretical limitations we argue are inherent to any model trained without a validity oracle. 

In supervised learning, there is a rich history of learning theory with various types of queries, including membership which are not unlike our (in)validity oracle. Under various assumptions, queries have been shown to facilitate the learning of complex classes such as finite automata~\cite{Angluin88} and DNFs~\cite{Jackson97}. See the survey of \cite{Angluin92} for further details.  Interestingly, \cite{Feldman09} has shown that for agnostic learning, i.e., without making assumptions on the generating distribution, the addition of membership queries does not enhance what is learnable beyond random examples alone. 
Supervised learning also has a large literature around active learning, showing how the ability to query examples reduces the sample complexity of many algorithms. See the survey of \cite{Hanneke14}. Note that the aim here is typically to save examples and not to expand what is learnable.
 
More sophisticated models, e.g., involving neural networks, can mitigate the invalidity problem as they often generate more realistic natural language and have even been demonstrated to generate \LaTeX{} that nearly compiles~\cite{Karpathy15} or nearly valid Wikipedia markdown. However, longer strings generated are unlikely to be valid. For example, \cite{Karpathy15} shows generated markdown which includes:
\begin{quote}
==Access to ''rap===
The current history of the BGA has been [[Vatican Oriolean Diet]], British Armenian, published in 1893.  While actualistic such conditions such as the [[Style Mark Romanians]] are still nearly not the loss.
\end{quote}

Even ignoring the mismatched quotes and equal signs, note that this example has two so-called ``red links'' to two pages that do not exist. Without checking, it was not obvious to us whether or not Wikipedia had pages titled {\em Vatican Oriolean Diet} or {\em Style Mark Romanians}. In some applications, one may or may not want to disallow red links. In the case that they are considered valid, one may seek a full generative model of what might plausibly occur inside of brackets, as the neural network has learned in this case. If they are disallowed, a model might memorize links it has seen but not generate new ones. A validity oracle can help the learner identify what it should avoid generating.

 In practice, \cite{KusnerPH17} discuss how generative models from neural networks (in particular autoencoders) often generate invalid sequences. 
\cite{JanzWPKH18} learn the validity of examples output by a generative model using oracle feedback. 

\section{Problem Formulation}
\label{sec:problem}
We will consider a setting where we have access to a distribution $p$ over a (possibly infinite) set $X$, and let $p_x$ be the probability mass assigned by $p$ to each $x \in X$.
For simplicity, we assume that all distributions are discrete, but our results extend naturally to continuous settings as well.
Let $\text{supp}(p) \subseteq X$ denote the support of distribution $p$.
We assume we have two types of access to $p$:
\begin{enumerate}
\item Sample access: We may draw samples $x_i \sim p$;
\item Invalidity access: We may query whether a point $x_i$ is ``invalid''.
\end{enumerate}
To be more precise on the second point, we assume we have access to an oracle which can answer queries to the function $\Inv: X \rightarrow \{0, 1\}$, where $\Inv(x) = 1$ indicates that a point is ``invalid.'' As shorthand, we will use $\Inv(q) = \Exp_{x \sim q}[\Inv(x)]$. Put another way, if $V$ is the set of valid points, then $\Inv(q)=1-q(V)$. Henceforth, we find it more convenient to upper-bound invalidity rather than lower-bound validity. 

For this work, we will assume that $\Inv(x)=0$ for all $x \in \text{supp}(p)$, i.e., $\Inv(p)=0$, though examples may also have $\Inv(x)=0$ even if $p(x)=0$. However, we note that it is relatively straightforward to extend our results to a more general case by simply removing from the random positive examples from those that have $\Inv(x_i)=1$.

Our goal is to output a distribution $\hat q$ with low invalidity and expected loss, for some monotone decreasing loss function $L : [0,1] \rightarrow [0,M]$.  In addition to the natural loss function $L(q_x)=\min(M, \log 1/q_x)$ mentioned earlier, a convex bounded loss is $L(q_x)=\log 1/(q_x+\exp(-M))$. 
For a class $Q$ of candidate distributions $q$ over $X$, we aim to solve the following problem:
$$\min_{ q \in Q \atop \Inv(q) = 0 } \Loss(q)=\min_{ q \in Q \atop \Inv(q) = 0 } \Exp_{x \sim p} \left[ L(q_x) \right].$$
Let $OPT$ be the minimum value of this objective function, and $q^*$ be a distribution which achieves this value. In practice we can never determine with certainty whether any $\hat q$ has 0 invalidity. Instead, given $\eps_1, \eps_2 > 0$, we want that $\Loss(\hat q) \le OPT+\eps_1$ and $\Inv(\hat q) \le \eps_2$. 


\begin{remark}
  Note that given a candidate distribution $\hat q$ it is straightforward to check whether it satifies the loss and validity requirements, with probability $1-\delta$, by computing the empirical loss using $O\left(\frac 1 {\eps_1^2} \log(1/\delta)\right)$ samples from $p$ and by querying the invalidity oracle $O\left(\frac 1 {\eps_2} \log(1/\delta)\right)$ times using samples generated from $\hat q$. This observation allows us to focus on distribution learning algorithms that succeed with a constant probability as we can amplify the success probability to $1-\delta$ by repeating the learning process $O(\log(1/\delta))$ times and checking whether the ouput is correct. 
\end{remark}

\section{Proper Learning}
\label{sec:proper}
For ease of exposition, we begin with a canonical and simple example, where our goal is to approximate the distribution $p$ using a 
uniform distribution over a two-dimensional rectangle (or, in higher dimensions, a multi-dimensional box).

Here, the goal is to find a uniform distribution $q^*$ over a rectangle that best approximates
$p$ (i.e., minimizes some loss) while lying entirely in its valid region. 
We are allowed to output a uniform distribution $\hat q$ over a rectangle that has at least $1-\eps_2$ of its mass within the valid region.
Figure~\ref{fig:rectangle} illustrates the target distribution $q^*$ graphically.

\subsection{Example: Uniform distributions over a Box}

Let $X = \{0,1,...,\Delta-1\}^d$ and assume that $Q$ is the family of distributions that are uniform over a box, i.e.
for every $q \in Q$, there exists $\vec a, \vec b \in \{0,1,...,\Delta-1\}^d$ such that:
$$q_x = \frac { \Ind[ \forall i \in \{1,...,d\}: x_i \in [a_i, b_i] ] } { \prod_{i=1}^d (b_i-a_i+1) }$$

\begin{theorem}
  Using $O\left(\frac {d M^2} {\eps_1^2} \right)$ samples and $\frac {1} {\eps_2} \left(\frac {d M} {\eps_1} \right)^{O( d )}$ invalidity queries on $p$, there exists an algorithm which identifies a distribution $\hat q \in Q$, such that $\Inv(\hat q) \le \eps_2$ and 
  $\Loss(\hat q) \le \Loss(q^*) + \eps_1$ with probability $3/4$.
\end{theorem}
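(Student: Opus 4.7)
The plan is empirical risk minimization over a carefully chosen finite family of candidate boxes, coupled with rejection of candidates that fail an empirical invalidity test. The algorithm (1) draws $n = O(dM^2/\eps_1^2)$ samples $S$ from $p$ (all of them valid, since $\Inv(p)=0$); (2) enumerates the family $\mathcal{B}$ of all axis-aligned boxes whose coordinate-wise endpoints $a_i,b_i$ are drawn from the $i$-th coordinates of points in $S$, giving $|\mathcal{B}| \le n^{2d}$; (3) for each $B \in \mathcal{B}$, estimates $\Inv(B)$ using $k = O(\log |\mathcal{B}| / \eps_2)$ uniform-in-$B$ samples fed to the invalidity oracle, and computes the empirical loss $\hat{\Loss}(B) = \hat p(B)\min(M,\log V_B) + (1-\hat p(B))M$; and (4) returns the $B \in \mathcal{B}$ minimizing $\hat{\Loss}(B)$ among those whose estimated invalidity is at most $\eps_2/2$.

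The key structural fact is that $\mathcal{B}$ contains a good surrogate for $q^*$. Let $B^*$ be the minimum axis-aligned bounding box of $S \cap q^*$ (if $S \cap q^* = \emptyset$, replace $B^*$ by a single-point box; this case only matters when $p(q^*)$ is so small that $\Loss(q^*)$ is already within $\eps_1$ of $M$). Then $B^* \in \mathcal{B}$, and $B^* \subseteq q^*$ implies both $\Inv(B^*)=0$ and $V_{B^*} \le V_{q^*}$. Since every sample in $q^*$ also lies in $B^*$, we have $\hat p(B^*) = \hat p(q^*)$, and monotonicity of $L$ gives $\hat{\Loss}(B^*) \le \hat{\Loss}(q^*)$. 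Consequently the output $\hat q$ satisfies $\hat{\Loss}(\hat q) \le \hat{\Loss}(B^*) \le \hat{\Loss}(q^*)$.

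For the loss guarantee, observe that $L(q_B(x)) = M - \bigl(M - \min(M,\log V_B)\bigr)\,\Ind[x \in B]$, so $|\hat{\Loss}(B) - \Loss(B)| \le M\,|\hat p(B) - p(B)|$ for every box $B$. Axis-aligned boxes in $d$ dimensions have VC dimension $2d$, so standard uniform convergence gives $\sup_B |\hat p(B) - p(B)| \le \eps_1/(2M)$ with high constant probability once $n = \Omega(dM^2/\eps_1^2)$. Applying this simultaneously to $\hat q$ and to $q^*$ yields $\Loss(\hat q) \le \hat{\Loss}(\hat q) + \eps_1/2 \le \hat{\Loss}(q^*) + \eps_1/2 \le \Loss(q^*) + \eps_1$.

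For the validity guarantee, a one-sided Chernoff bound and a union bound over $\mathcal{B}$ show that $k = O(\log|\mathcal{B}|/\eps_2)$ queries per candidate suffice so that, with high constant probability, no $B$ with $\Inv(B) > \eps_2$ passes the $\eps_2/2$ threshold while $B^*$ (having $\Inv(B^*)=0$) always does. The total query count is $|\mathcal{B}| \cdot k = n^{2d} \cdot O(d \log n / \eps_2) = (dM/\eps_1)^{O(d)}/\eps_2$, matching the claim. The main conceptual step is the ``shrink-to-bounding-box'' argument showing that restricting endpoints to sample coordinates is lossless; it relies crucially on $L$ being decreasing, so that tightening a valid box around the samples it contains can only reduce the loss.
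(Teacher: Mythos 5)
Your proof is correct and takes essentially the same approach as the paper: both draw $O(dM^2/\eps_1^2)$ samples, invoke the VC dimension $2d$ of axis-aligned boxes for uniform convergence of empirical loss, observe that snapping to the bounding box of the samples inside $\Supp(q^*)$ yields a valid candidate with no-worse empirical loss, restrict attention to the $n^{2d}$ sample-aligned boxes, test each with $O(\log(n^{2d})/\eps_2)$ invalidity queries, and return the empirically best box passing the validity test. The only cosmetic differences are that you reject candidates via an $\eps_2/2$ empirical-invalidity threshold while the paper keeps only boxes with zero invalid samples (both work with the same query budget), and you explicitly handle the edge case $S \cap \Supp(q^*) = \emptyset$, which the paper leaves implicit.
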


\begin{proof}
  Since the VC-dimension of $d$-dimensional boxes is $2 d$, with probability $7/8$ after taking a set $X_P$ of $P = O\left(\frac {d M^2} {\eps_1^2} \right)$ samples from $p$, we can estimate 
  $p(\Supp(q))$ for all distributions $q \in Q$ within $\pm \frac {\eps_1} {2 M}$ by forming the empirical distribution. This implies that the empirical loss $\overline \Loss(q) = \frac 1 { |X_P| } \sum_{x \in X_p} L(q_x)$ is an estimate to the loss function, i.e. $\overline \Loss(q) \in \Loss(q)\pm \frac {\eps_1} {2}$.
  
  Now consider the optimal distribution $q^*$. Observe that any distribution $q \in Q$, such that $\Supp(q) \subseteq \Supp(q^*)$ and $\Supp(q)\cap X_P = \Supp(q^*)\cap X_P$, satisfies $\overline \Loss(q) \le \overline \Loss(q^*)$ and $\Inv(q)=0$. Thus, there exists a $q'\in Q$ with this property that has at least one point $x \in X_P$ in each of the $2 d$ sides of its box.
  
  As there are at most $P^{2 d}$ such boxes, we can check identify which of their corresponding distribution $q \in Q$ have $\Inv(q) \le \eps_2$ by quering $\Inv$ at $O\left( \frac 1 {\eps_2} \log \left( P^{2 d} \right) \right)$ random points from each of them. This succeeds with probability $7/8$ and uses in total 
  $\frac {1} {\eps_2} \left(\frac {d M} {\eps_1} \right)^{O( d )}$ 
  invalidity queries. 
  
  We pick $\hat q$ to be the distribution that minimizes the empirical $\overline \Loss(\hat q)$ out of those that have no invalid samples in the support.
  Overall, with probability $3/4$, we have that $\Inv(\hat q) \le \eps_2$ and
  $$
    \Loss(\hat q) \le \overline \Loss(\hat q) + \frac {\eps_1} 2 
    \le \overline \Loss(q') + \frac {\eps_1} 2 
    \le \overline \Loss(q^*) + \frac {\eps_1} 2
    \le \Loss(q^*) + {\eps_1}.
$$
\end{proof}

\subsection{Impossibility of Proper Learning}\label{sec:impossibility}

The example in the previous section required number of queries that is exponential in $d$ in order to output a distribution $\hat q \in Q$ with $\Inv(\hat q) \le \eps_2$ and $\Loss(\hat q) \le \Loss(q^*) + \eps_1$. 
We show that such an exponential dependence in $d$ is required when one aims to learn a distribution $\hat q$ properly even for the class of uniform distributions over axis-parallel boxes.

\begin{theorem}\label{thm:rectangleslb}
  Even for $\Delta = 2$, the number of queries required to find a distribution $\hat q \in Q$ such that $\Inv(\hat q) \le \frac 1 4$ and 
  $\Loss(\hat q) \le \Loss(q^*) + \frac 1 {2 d}$ with probability at least $3/4$ is at least $2^{\Omega(d)}$.
\end{theorem}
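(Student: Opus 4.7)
The plan is to apply Yao's minimax principle. I will exhibit a distribution over hard instances parameterized by a hidden subset $S^* \subseteq [d]$ with $|S^*| = d/2$, drawn uniformly at random; there are $N = \binom{d}{d/2} = 2^{\Omega(d)}$ such subsets. For each $S^*$ I will define a distribution $p$ and a valid set $V_{S^*}$ (with $\text{supp}(p) \subseteq V_{S^*}$) so that the unique box $\hat q \in Q$ satisfying simultaneously $\Inv(\hat q) \le 1/4$ and $\Loss(\hat q) \le \Loss(q^*) + 1/(2d)$ is the $(d/2)$-dimensional sub-cube $B_{S^*} = \{x : x_i = 0 \text{ for } i \notin S^*\}$. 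Then I will argue, via a Bayesian/information-theoretic argument, that any deterministic algorithm using $o(N)$ invalidity queries fails to identify $S^*$ correctly on more than $1/4$ of the instances.

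For the uniqueness of $B_{S^*}$ as an acceptable output, I split into two verifications. First (feasibility): any other $(d/2)$-dimensional sub-cube $B_{S'}$ with $S' \neq S^*$ satisfies $|B_{S'} \cap B_{S^*}|/|B_{S'}| = 2^{|S' \cap S^*| - d/2} \le 1/2$, so $\Inv(B_{S'}) \ge 1/2 > 1/4$ once $V_{S^*}$ is chosen sufficiently close to $B_{S^*}$; hence no competing $(d/2)$-dimensional box is feasible. Second (loss gap): with $p$ chosen appropriately (for instance uniform on $\{\vec 0, e_1, \ldots, e_d\}$, or a weighted mixture tailored to the loss function) and the capped log-loss with $M$ of order $d \log 2$, the loss $f(k)$ of a sub-cube of dimension $k$ is a strictly convex function minimized near $k^* = d/2$, with second-order growth $f(k)-f(k^*) = \Theta((k-k^*)^2/d)$. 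Since $\log 2 /(d+1) > 1/(2d)$, any sub-cube of dimension $k^* \pm 1$ already violates the loss tolerance, so every acceptable output must have dimension exactly $d/2$ and must equal $B_{S^*}$.

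For the hardness half, I will choose $V_{S^*}$ so that the marginal distribution of $\Inv(x)$ (over a uniformly random $S^*$) is nearly independent of $S^*$ for every fixed query $x$; equivalently, the set $\{S^* : \Inv_{S^*}(x) = 1\}$ has density $o(1/N)$ in expectation over $x$. Then, by a symmetrization / Yao-style adversary argument (or equivalently by Fano's inequality applied to the posterior over $S^*$), any algorithm making $t$ queries can reduce its posterior entropy by at most $o(1)$ bits per query, so achieving success probability $\ge 3/4$ requires $t \cdot o(1) \ge \log N - O(1) = \Omega(d)$, forcing $t = 2^{\Omega(d)}$.

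The main obstacle is the construction of $V_{S^*}$: the naive choice $V_{S^*} = B_{S^*}$ enforces feasibility and uniqueness cleanly but is too informative, since a box-membership oracle can be learned in $O(d)$ queries by coordinate-by-coordinate probing and so fails to give an exponential bound. The construction must therefore inflate $V_{S^*}$ with a carefully chosen ``decoy'' set that is symmetric in $S^*$ (so individual query answers carry little information about $S^*$) yet does not enable any non-$B_{S^*}$ box of dimension $d/2$ to become feasible. I expect this trade-off between symmetry (to make queries uninformative) and asymmetry (to keep $B_{S^*}$ uniquely distinguished) to be the technically delicate core of the argument; once it is in place, the remaining pieces are a routine convexity calculation for the loss and a standard Yao-style averaging over the $2^{\Omega(d)}$ instances.
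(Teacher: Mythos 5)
Your skeleton is right (hide a subset, show the unique acceptable output is the corresponding box, then lower-bound the queries needed to identify the subset), and you have correctly located the difficulty. But the one piece you explicitly defer --- the construction of the ``decoy'' set that makes $V_{S^*}$ informative enough to force a unique answer yet uninformative enough that queries cannot home in on it --- is the entire content of the proof, and leaving it as a to-do is a genuine gap. The paper's resolution is short but not something one would guess without it: with hidden vector $y$ of weight $|y|=d/3$, declare $\Inv(x)=0$ whenever $|x| < d/6$ (regardless of $y$), and otherwise $\Inv(x)=0$ iff $x \le y$ coordinatewise. Every low-weight query is then valid by fiat, and every query with $|x| > d/3$ is invalid for every candidate $y$, so both are uninformative; the only informative band is $|x| \in [d/6, d/3]$. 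Crucially, the hardness there does \emph{not} come from making the marginal of $\Inv(x)$ nearly independent of the hidden subset --- a query with $x \le y$ in that band reveals $y$ outright, so the per-query mutual information is not uniformly small and a naive Fano bound does not go through. Instead the paper uses a Gilbert--Varshamov packing: a family $Y$ of $2^{\Omega(d)}$ vectors of weight $d/3$ with pairwise intersection below $d/6$, so that for any $x$ in the informative band at most one $y' \in Y$ has $x \le y'$. Each query therefore eliminates at most one hypothesis from $Y$, and fewer than $|Y|/2$ queries cannot succeed with probability $3/4$. This is a hypothesis-elimination count, not a small-information bound, and your ``nearly independent marginal'' framing would need substantial reworking to accommodate queries that occasionally reveal everything.

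Two secondary divergences worth noting. You take $|S^*|=d/2$; the paper takes $|y|=d/3$, which is what makes the informative band $[d/6,d/3]$ and the packing clean. And you reach for a capped log-loss and a convexity-in-dimension calculation to establish the loss gap; the paper instead uses the coverage loss $L(q_x)=\Ind[q_x=0]$, under which $p$ uniform on $\{e_1,\dots,e_d\}$ gives $\Loss(q^*)=2/3$ and any output within $1/(2d)$ must cover at least $d/3$ of the basis vectors --- a one-line counting argument. That simpler loss makes the uniqueness half of the argument nearly trivial, which is the right allocation of effort given that the decoy construction and the packing are where the real work lies.
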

\begin{proof}
  We describe the construction of the lower-bound below:
  \begin{itemize}
  \item The distribution $p$ assigns probability $1/d$ to each standard basis vector $\vec {e_i}$, i.e., the vector with $i$-th entry equal to $1$ and all other coordinates equal to $0$.
  \item For some arbitrary vector $y \in \{0,1\}^d$ with $|y| = \sum_{i=1}^d y_i = d/3$, we define $\Inv(x)$ as: $$\Inv(x) = \begin{cases}
       0 &\quad\textbf{if } |x| < d/6 \textbf{ or } \text{for all } i, x_i \le y_i\\
       1 &\quad\text{otherwise.} \\ 
     \end{cases}$$
  \item The loss function is the coverage function, i.e., $L(q_x) = \Ind[ q_x = 0 ]$, where we pay a loss of $1$ for each point $q$ assigns $0$ mass to, and $0$ otherwise.
\end{itemize}

Given this instance, the optimal $q^*$ is uniform over the box $\times_{i=1}^d \{0,y_i\}$ and has loss $\frac 2 3$. In order to achieve loss $\frac 23 + \frac 1 {2d}$, the output distribution $\hat q$ must include at least $d/3$ of the vectors $\vec e_i$ in its support. Thus, $\hat q$ must be a box $\times_{i=1}^d \{0,y'_i\}$ defined by some vector $y' \in \{0,1\}^d$ with 
$|y'| \ge d/3$. Moreover, it must be that $y' = y$. This is because if there exists a coordinate $j$ such that $y'_j=1$ and $y_j=0$, then with probability greater than $1/4$, the distribution $q$ produces a sample $x$ with $x_j=1$ and $|x| \ge d/6$. Since such a sample is invalid, $\Inv(\hat q) > \frac 14$ which would lead to a contradiction.

Therefore the goal is to find the vector $y$. 
Since any samples from $p$ only produce points $e_i$ they provide no information about $y$.
Furthermore, queries to $\Inv$ at points $x$ with $|x| < d/6$ or $|x| > d/3$ also provide no information about $y$, as in the former case $\Inv(x) = 0$ since $|x| < d/6$, and in the latter case $\Inv(x) = 1$ since there will always be an $i$ where $1 = x_i > y_i = 0$. 
Therefore, it only makes sense to query points with $|x| \in [d/6,d/3]$.

We show that the number of queries needed to identify the true $y$ is exponential in $d$. 
We do this with a Gilbert-Varshamov style argument.
To see this, consider a set of vectors $Y \subset \{0,1\}^d$ such that for all $y' \in Y$ we have that $|y'|=d/3$ and any two distinct vectors $y^1, y^2 \in Y$ have fewer than $d/6$ coordinates where they are both 1, i.e. $\sum_i y^1_i \cdot y^2_i < d/6$.

Given this set $Y$, note that any query to $\Inv$ at a point $x$ with $|x| \in [d/6,d/3]$ eliminates at most a single $y' \in Y$.
Thus with fewer than $|Y|/2$ queries, the probability that the true $y$ is identified is less than $1/2$.

To complete the proof, we show that a set $Y$ exists with $|Y| = e^{d/216}$. We will use a randomized construction where we pick $|Y|$ random points $y^1,...,y^{|Y|} \in \{0,1\}^d$ with $|y^a|=d/3$ uniformly at random. Consider two such random points $y^a$ and $y^b$. 

Define the random variable $z_i$ to be 1 if $y^1_i = y^2_i = 1$ and 0 otherwise. We have
$$\Pr [z_i = 1] = \frac{1}{3} \cdot \frac{1}{3} = \frac{1}{9}.$$
Although $z_i$'s are not independent, they are negative correlated. We can apply the multiplicative Chernoff bound:
$$
\Pr\left[\sum_{i=1}^d z_i \ge d/6\right] \leq e^{-d/108}
$$

Then by a union bound over all pairs $a<b$, we have
\[
\Pr[\forall 1 \leq a < b \leq |Y|, \sum_i y^a_i \cdot y^b_i < d/6 ] > 1- \binom{|Y|}{2} \cdot e^{-d/108} > 0. 
\]
This shows that the number of queries an algorithm must make to succeed with probability at least $3/4$ is at least $2^{\Omega(d)}$.
\end{proof}

As Theorem~\ref{thm:rectangleslb} shows, proper learning suffers from a ``needle in a haystack'' phenomenon. To build intuition, we present an alternative simpler setting that illustrates this point more clearly.

Let $Q$ be the set of all distributions $q_i$ that, with probability $\frac12$, output $0$, and otherwise output $i>0$. 
Let $p$ be the distribution that always outputs $0$ and suppose that $\Inv(i)=1$ for all $i \neq \{0,i^*\}$ for some arbitrary $i^*$. 
In order to properly learn the distribution $\hat q$, one needs to locate the hidden $i^*$ by querying the invalidity oracle many times. This requires a number of queries that is proportional to the size of the domain $X$, which is intractable when the domain is large (e.g., in high dimensions) or even infinite.

Note, however, that in this example, even though learning a distribution $q$ within the family $Q$ is hard, we can easily come up with an improper distribution that always outputs point $0$. Such a distribution is always valid and achieves optimal loss. In the next section we show that even though proper learning may be information-theoretically expensive or impossible, it is actually always possible to improperly learn using polynomially many samples and invalidity queries.

\section{Improper Learning}
\label{sec:improper}

In this section, we show that if we are allowed to output a distribution that is not in the original family $Q$, we can efficiently identify a distribution that achieves close to optimal loss and almost-full validity using only polynomially many samples from $p$ and invalidity queries.

\subsection{Algorithm}

We provide an algorithm, Algorithm \ref{alg:full-validity}, that can solve the task computationally efficiently assuming access to an optimization oracle $\text{Oracle}(X_P,X_N)$. $\text{Oracle}(X_P,X_N)$ takes as input sets $X_P$ and $X_N$ of positive and negative (invalid) points and outputs a distribution $q$ from the family of distributions $Q$ that minimizes the empirical loss with respect to $X_P$ such that $\Supp(q) \cap X_N = \emptyset$, i.e. no negative point in $X_N$ is in the support of $q$.

\begin{algorithm}[ht]
   \caption{Improperly learning to generate valid samples}
   \label{alg:full-validity}
\begin{algorithmic}[1]
   \STATE {\bfseries Input:} Distribution family $Q$, sample and invalidity access to $p$, and parameters $\ve_1, \ve_2 > 0$.
   \STATE Draw a set $X_P$ of $P$ samples from $p$.
   \STATE Set $X_N \leftarrow \emptyset$
   \FOR{$i=1,...,R$}
     \STATE Let $q^i \leftarrow \text{Oracle}(X_P,X_N)$.
     \STATE Generate $T$ samples from $q^i$ and query the invalidity of each of them.
     \STATE Let $x^-_1,...,x^-_k$ be the invalid samples.
     \IF{there are no invalid samples, i.e. $k=0$}
        \RETURN $q^i$
     \ELSE
       \STATE Set $X_N \leftarrow X_N \cup \{x^-_1,...,x^-_k\}$
     \ENDIF
   \ENDFOR
   \STATE Sample $i \sim \Unif(\{1,...,R\})$
   \STATE Let $A^i \leftarrow {\{x: \exists j > i \text{ with } x \in \Supp(q^j)\}}$
   \RETURN the distribution that samples $x \sim q^i$ and outputs $x$ if $x \in A^i$ and any valid point $x^*$ o/w
\end{algorithmic}
\end{algorithm}

The algorithm repeatedly finds the distribution with minimum loss that doesn't contain any of the invalid points seen so far and tests whether it achieves almost full-validity. If it does, then it outputs that distribution. Otherwise it tries again using the new set of invalid points. However, this process could repeat for a very long time without finding a distribution. To avoid this, after running for a few rounds, if it has failed to output a distribution, the algorithm is able to generate an improper distribution that provides the required guarantee to solve the task. This meta-distribution is obtained by randomly picking one of the candidate distributions examined so far and filtering out points that no other distributions agree on.

\subsection{Analysis}

We show that this Algorithm~\ref{alg:full-validity} outputs with high probability a distribution $\hat q$ that has $\Loss(\hat q) \le \Loss(q^*) + \eps_1$ and $\Inv(\hat q) \le \eps_2$. 

\begin{theorem}\label{thm:full-validity}
  The choice of parameters
  
  \begin{equation}
    P=\Theta\left(\frac{M^2}{\ve_1^2} \log |Q| \right), \quad R = \Theta \left( \frac { M } {\eps_1} \right), \quad  T = \Theta\left(\frac{R}{\ve_2} \log |Q| \right)
    \end{equation}
  guarantees that Algorithm~\ref{alg:full-validity} outputs w.p. $3/4$ a distribution $\hat q$ with $\Loss(\hat q) \le \Loss(q^*) + \eps_1$ and $\Inv(\hat q) \le \eps_2$ using $\Theta\left(\frac{M^2}{\ve_1^2} \log |Q| \right)$ samples from $p$ and $\Theta\left(\frac{M^2}{ \ve_1^2 \ve_2} \log |Q|\right)$ invalidity queries.
  
  The algorithm runs in time polynomial in $M$, $\ve_1^{-1}$, $\ve_2^{-1}$, and $\log |Q|$ assuming that the following each can be  performed at unit cost: (a) queries to \text{Oracle}, (b) sampling from the distributions output by $\text{Oracle}$, and (c) checking whether a point $x$ is in the support of a distribution output by $\text{Oracle}$.
\end{theorem}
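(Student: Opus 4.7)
The plan is to decompose the analysis into (i) a uniform convergence step for the empirical loss, (ii) feasibility of $q^*$ for the oracle, (iii) a ``good termination'' case in which the algorithm returns some $q^i$, and (iv) a ``non-termination'' case in which we must argue about the mixture distribution $\hat q$.

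First, because each $L(q_x)\in[0,M]$, a Hoeffding bound together with a union bound over $Q$ shows that $P=\Theta(M^2\log|Q|/\eps_1^2)$ iid samples from $p$ make the empirical loss $\overline\Loss(q)$ uniformly $\eps_1/4$-close to $\Loss(q)$ for every $q\in Q$, with probability at least $7/8$. Next I would observe that $\Inv(q^*)=0$ means $\Supp(q^*)$ is disjoint from every $X_N$ that the algorithm ever constructs, so $q^*$ is always feasible for $\text{Oracle}(X_P,X_N)$; hence for every iteration $\overline\Loss(q^i)\le\overline\Loss(q^*)$ and, via uniform convergence, $\Loss(q^i)\le \Loss(q^*)+\eps_1/2$.

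For the early-termination case, I would use a Chernoff bound: if $\Inv(q)>\eps_2$ for some $q\in Q$, then $T$ independent samples miss every invalid point with probability at most $(1-\eps_2)^T\le e^{-T\eps_2}$; a union bound over the $R$ iterations (or equivalently over the at-most-$|Q|$ distinct candidates) shows that our choice of $T$ makes any returned $q^i$ satisfy $\Inv(q^i)\le\eps_2$ with high probability. This part is routine.

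The main obstacle is the non-termination case, where the algorithm outputs the mixture $\hat q$. For validity, I would exploit the crucial structural property built into $A^i$: any invalid $x\in A^i$ must, by definition, lie in some $\Supp(q^j)$ with $j>i$, and therefore $x\notin X_N^{(j)}$, which forces $x$ not to have been sampled at iteration $i$ (otherwise $x$ would be in $X_N$ and excluded from all later supports). This lets me bound
$$
\Inv(\hat q)=\frac{1}{R}\sum_i q^i(B^i\cap A^i)\ \le\ \frac{1}{R}\sum_i q^i\bigl(\{x\in B^i:\ x\text{ not sampled at iteration }i\}\bigr),
$$
and then control the right-hand side per iteration via a concentration/union-bound argument over the at most $|Q|$ possible values of $q^i$, with the factor $R$ in $T$ absorbing a per-iteration slack of $\eps_2$ in the mixture. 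For loss, I would use convexity of $L$ together with $\hat q_x\ge \frac{1}{R}\sum_i q^i(x)\Ind[x\in A^i]$ to write $L(\hat q_x)\le \frac{1}{R}\sum_i L(q^i(x)\Ind[x\in A^i])$, split the sum into the $i$'s with $x\in A^i$ (contributing at most $\frac{1}{R}\sum_i\Loss(q^i)\le \Loss(q^*)+\eps_1/2$ after taking $p$-expectation) and the $i$'s with $x\notin A^i$ (contributing at most $M\cdot\frac{1}{R}\sum_i p(\overline{A^i})$); the choice $R=\Theta(M/\eps_1)$ is precisely what is needed so that this slack is absorbed into $\eps_1$, after arguing via $q^*$-feasibility and the non-decreasing empirical-loss trajectory $\overline\Loss(q^1)\le\cdots\le\overline\Loss(q^R)\le \overline\Loss(q^*)$ that the masses $p(\overline{A^i})$ cannot be consistently large.

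The final step is to combine the three high-probability events (uniform convergence, early-termination validity, and mixture-case validity-plus-loss) by a union bound, yielding success probability at least $3/4$. The hardest step is the mixture validity bound, because it must simultaneously handle the dependence of $A^i$ on later iterations and the need to avoid a union bound scaling with the (possibly huge) support of $q^i$; the trick is to route everything through the finite set $Q$, which is where the $\log|Q|$ factor in $T$ enters.
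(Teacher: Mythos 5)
Your decomposition (uniform convergence, $q^*$-feasibility, early termination, mixture case) matches the paper's, and the first three pieces are correctly identified. However, both of your arguments for the mixture case contain gaps, and they are exactly where the theorem's difficulty lies.

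For validity, the chain $\Inv(\hat q)\le \frac1R\sum_i q^i(\{x\in B^i:\ x\text{ not sampled at iter }i\})$ is true but useless: for a $q^i$ spread over many low-mass invalid points, the set of unsampled points can carry $q^i$-mass close to $1$, so this bound does not shrink to $\eps_2$. The ``route through $Q$'' you allude to has to be made structural, not just a union bound over which points get sampled. The paper's actual argument is: since $T=\Theta(\frac{R}{\eps_2}\log|Q|)$, with high probability for every pair $(i,q)$ with $q^i(\Supp(q)\cap\text{Invalid})\ge\eps_2/R$ some sample drawn at iteration $i$ lands in $\Supp(q)\cap\text{Invalid}$; consequently any $q^j$ returned by the oracle at a later round $j>i$ must satisfy $q^i(\Supp(q^j)\cap\text{Invalid})<\eps_2/R$. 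Then $\{x:\Inv(x)=1\}\cap A^i\subseteq\bigcup_{j>i}(\Supp(q^j)\cap\text{Invalid})$ and summing the at most $R$ terms gives $\Inv(\hat q)<\eps_2$ \emph{for every fixed $i$}, not merely on average. This per-$j$ threshold of $\eps_2/R$, not a bound on ``unsampled points,'' is the key; your chain does not recover it.

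For loss, Theorem~\ref{thm:full-validity} does not assume convexity of $L$, and the paper's proof does not use it. Instead, for each fixed $i$ one has $\hat q_x\ge q^i_x\,\Ind[x\in A^i]$, so monotonicity of $L$ alone gives
$\Loss(\hat q)\le\Loss(q^i)+M\Pr_{x\sim p}[x\in\Supp(q^i)\wedge\forall j>i:\,x\notin\Supp(q^j)]$,
and $\Loss(q^i)\le\Loss(q^*)+\eps_1/2$ from feasibility plus uniform convergence (as you had). The step you attribute to the ``non-decreasing trajectory'' is actually a purely combinatorial fact: for a fixed $x$, at most one index $i$ can satisfy $x\in\Supp(q^i)$ and $x\notin\Supp(q^j)$ for all $j>i$ (namely the largest such $i$), so
$\E_i\!\left[\Pr_{x\sim p}[x\in\Supp(q^i)\wedge\forall j>i:\,x\notin\Supp(q^j)]\right]\le 1/R$,
and Markov over the random $i$ then yields a probability $\le 16/R$, which $R=\Theta(M/\eps_1)$ converts into an additive $\eps_1/2$. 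Note also that the paper intersects with $\Supp(q^i)$ before taking $p$-probability; your $p(\overline{A^i})$ drops that intersection, and the combinatorial bound fails without it (e.g., a point $x$ outside every $\Supp(q^i)$ lies in every $\overline{A^i}$). So both the convexity hypothesis and the trajectory heuristic should be replaced by the one-index-per-$x$ observation.
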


Of course, the success probability can be boosted from 3/4 to arbitrarily close to $1-\delta$ by repeating the algorithm $O(\log 1/\delta)$ times and taking the best output. We prove Theorem~\ref{thm:full-validity} by showing two lemmas, Lemma~\ref{lem:full-validity-inv} and Lemma~\ref{lem:full-validity-loss}, bounding the invalidity and loss of the returned distribution.

\begin{lemma}\label{lem:full-validity-inv}
The returned distribution $\hat q$ by Algorithm~\ref{alg:full-validity} satisfies $\Inv(\hat q) \le \eps_2$ w.p. $7/8$.
\end{lemma}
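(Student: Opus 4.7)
The plan is to split the analysis of Algorithm~\ref{alg:full-validity} into two cases depending on whether the algorithm returns some $q^i$ inside the loop (because $k=0$ at some iteration) or exits the loop after all $R$ iterations and returns the improper mixture $\hat q$ constructed at the end.

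For the loop-termination case, the argument is a standard concentration step. If the algorithm returns $q^i$, then all $T$ samples drawn from $q^i$ were valid. For any fixed $q \in Q$ with $\Inv(q) > \eps_2$, this event occurs with probability at most $(1-\eps_2)^T \le e^{-T\eps_2}$. Union-bounding over the at most $R\cdot |Q|$ possible (iteration, candidate-distribution) pairs and plugging in $T = \Theta(R\log|Q|/\eps_2)$ makes the total failure probability much smaller than $1/16$, so with probability at least $15/16$ any $q^i$ returned inside the loop must satisfy $\Inv(q^i)\le \eps_2$.

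For the improper-output case, I would write $\Inv(\hat q) = \frac{1}{R}\sum_{i=1}^R q^i(B\cap A^i)$ where $B=\{x:\Inv(x)=1\}$, and invoke the structural identity $A^i\cap X_N^{i+1}=\emptyset$: every $x\in A^i$ lies in $\Supp(q^j)$ for some $j>i$, and the oracle ensures $\Supp(q^j)\cap X_N^j=\emptyset$ with $X_N^{i+1}\subseteq X_N^j$. This yields $q^i(B\cap A^i)\le q^i(B\setminus X_N^{i+1})$, i.e., the contribution at iteration $i$ is at most the mass of invalid atoms of $q^i$ that escaped detection during the $T$ queries at that iteration. I would then split the invalid atoms of $q^i$ into ``heavy'' ones with $q^i_x\ge \eps_2/(2R)$ and ``light'' ones below this threshold: each heavy atom survives the $T$ queries with probability at most $e^{-T\eps_2/(2R)}$, which by the choice of $T$ is polynomially small in $R|Q|$, so a union bound eliminates every heavy contribution across all iterations. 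For the light atoms, I would use a per-atom telescoping argument, observing that for fixed $x$ with $p_i := q^i_x\,\Ind[x\in\Supp(q^i)]$ and $S_i := \sum_{j\le i} p_j$, the integral comparison $\sum_i p_i (1-p_i)^T \le \sum_i p_i e^{-TS_i} \le 1/T$ controls each individual atom's cumulative expected contribution.

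The main obstacle is bounding the aggregate effect of the ``light'' invalid atoms, which are individually negligible but potentially numerous; the per-atom integral estimate handles each one but must be combined carefully with a Markov step over the uniform random choice of $i$ (using that $T$ is large enough relative to $R$ and $\eps_2$) to turn the expected-mass bound into a high-probability statement. A final union bound over the two cases and over the various concentration-failure events then gives $\Inv(\hat q)\le \eps_2$ with probability at least $7/8$.
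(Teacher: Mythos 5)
Your Case~1 argument (loop termination: if $q^i$ is returned with $k=0$, then $\Inv(q^i)\le\eps_2$ with high probability by a Chernoff/union bound over $R|Q|$ pairs) is fine and matches the spirit of the paper.

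The gap is in Case~2. Your first step replaces $q^i(B\cap A^i)$ by the larger quantity $q^i(B\setminus X_N^{i+1})$, and then tries to control the latter via heavy/light atoms and a per-atom telescoping bound. But $q^i(B\setminus X_N^{i+1})$ can be $\Omega(1)$ even when $q^i(B\cap A^i)=0$: if $q^i$ spreads invalid mass $1/2$ uniformly over $N\gg T$ atoms (each of mass $\ll\eps_2/(2R)$, hence ``light''), then the $T$ samples at round~$i$ catch at most $T$ of them, so $q^i(B\setminus X_N^{i+1})\ge 1/2 - T/(2N)\approx 1/2$. Nothing downstream can shrink this: the per-atom bound of $O(1/T)$ is useless when summed over $N\gg T$ atoms, since it yields $N/T\gg 1$; you acknowledge the aggregation issue but the Markov step over $i$ does not resolve it, because the per-round bound on $q^i(B\setminus X_N^{i+1})$ is itself $\Omega(1)$. (As a secondary point, the inequality $\sum_i p_i(1-p_i)^T\le\sum_i p_i e^{-TS_i}$ is wrong as stated: $(1-p_i)^T\le e^{-Tp_i}$, but $e^{-Tp_i}\ge e^{-TS_i}$, so the direction fails for $i>1$.)

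What your decomposition discards is exactly the structure that makes the lemma true: $A^i$ is a union of supports $\Supp(q^j)$ with $q^j\in Q$, and $|Q|$ is finite. The paper's proof bounds $q^i(\Supp(q)\cap B)$ for \emph{every} $q\in Q$ simultaneously: if this mass is $\ge\eps_2/R$ for some $q$, then one of the $T$ samples from $q^i$ lands in $\Supp(q)\cap B$ (union bound over $Q$ and $R$), that point enters $X_N$, and thereafter $q$ can never be chosen by the oracle. Hence $q^i(\Supp(q^j)\cap B)<\eps_2/R$ for every $j>i$, and $\Inv(\hat q)\le\sum_{j>i}q^i(\Supp(q^j)\cap B)<\eps_2$. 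This bound holds regardless of whether the invalid atoms are heavy or light — light atoms collectively covered by some future $\Supp(q^j)$ are handled because the union bound is over \emph{sets} in $\{\Supp(q): q\in Q\}$, not over atoms. Your approach would need to recover this set-level union bound to close the gap; the atom-level analysis alone cannot.
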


\begin{proof}
  Let $ \text{Invalid} = \{x : \Inv(x) = 1 \}$ be the set of invalid points.
  Consider $q^i$ for some $i$ and any distribution $q \in Q$. If $q^i( \Supp(q) \cap \text{Invalid} ) \ge \frac { \eps_2 } { R }$, then with probability at least $\frac { \eps_2 } { R }$ a sample generated from $q^i$ lies in $\Supp(q) \cap \text{Invalid}$. Thus, with $T = \Theta( \frac  { R } { \eps_2 } \log |Q| )$ samples at least one lies in $\Supp(q) \cap \text{Invalid}$ w.p. $1 - \frac 1 {8 |Q| R}$. By a union bound for all $i$ and $q \in Q$, we get that with probability $7/8$ for all $q_i$ and all distributions $q \in Q$, if $q^i( \Supp(q) \cap \text{Invalid} ) \ge \frac { \eps_2 } { R }$ then at least one of the $T$ samples drawn from $q^i$ lies in $\Supp(q) \cap \text{Invalid}$.
  We therefore assume that this holds.
  
  Then, if the returned distribution $\hat q = q^i$ for some $i$, we get $$\Inv(q^i) = q^i( \Supp(q^i) \cap \text{Invalid} ) < \frac { \eps_2 } { R } \le \eps_2 $$ as required.
  To complete the proof we show the required property when returned distribution $\hat q$ is the improper meta-distribution.
  
  We have that for all $j > i$, $q^i( \Supp(q^j) \cap \text{Invalid} ) < \frac { \eps_2 } { R }$ since after round $i$ for any $q \in Q$ with $q^i( \Supp(q) \cap \text{Invalid} ) \ge \frac { \eps_2 } { R }$ the set $X_N$ will contain at least one point in $\Supp(q) \cap \text{Invalid}$ and thus any such $q$ will not be considered.
  
  Therefore, we have that 
  \begin{align*}
    \Inv(\hat q) 
&= \Exp_{x\sim\hat q}\left[ \Inv(x) \right] \\
&= \Exp_{x\sim q^i}\left[ \Inv(x) \cdot \Ind\left[\exists j > i: x \in \Supp(q^j) \right]\right] \\
&\le  \sum_{j=i+1}^R \Exp_{x\sim q^i}\left[ \Inv(x) \cdot \Ind\left[ x \in \Supp(q^j) \right] \right] \\  
&= \sum_{j=i+1}^R q^i( \Supp(q^j) \cap \text{Invalid} ) \le  \sum_{j=i+1}^R \frac { \eps_2 } { R } <  \eps_2.
  \end{align*}
\end{proof}

\begin{lemma}\label{lem:full-validity-loss}
The returned distribution $\hat q$ by Algorithm~\ref{alg:full-validity} satisfies $\Loss(\hat q) \le \Loss(q^*) + \eps_1$ w.p. $7/8$.
\end{lemma}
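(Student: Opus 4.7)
The plan is to split the analysis at the two exit paths of Algorithm~\ref{alg:full-validity} and use $q^*$ as a common benchmark via the oracle's optimality.

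First I would install a uniform concentration bound on empirical loss. Since $L(q_x)\in[0,M]$, Hoeffding together with a union bound over $Q$ shows that the choice $P=\Theta(M^2/\eps_1^2\log|Q|)$ suffices so that, with probability at least $7/8$, $|\overline{\Loss}(q)-\Loss(q)|\le\eps_1/4$ simultaneously for every $q\in Q$. Condition on that event. The crucial observation is that $q^*$ is always a feasible input to $\text{Oracle}(X_P,X_N)$: since $\Inv(q^*)=0$ and the set $X_N$ only ever contains invalid points, $\Supp(q^*)\cap X_N=\emptyset$ at every round. Thus $\overline{\Loss}(q^i)\le\overline{\Loss}(q^*)$, and combining with concentration, $\Loss(q^i)\le\Loss(q^*)+\eps_1/2$ for every round $i$. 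If the algorithm exits at line 9 with $\hat q=q^i$, this already gives the lemma.

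The remaining case is when $\hat q=\frac{1}{R}\sum_i \hat q^{(i)}$ is the improper mixture, where $\hat q^{(i)}$ denotes ``sample $x\sim q^i$; output $x$ if $x\in A^i$, else output the valid point $x^*$.'' For a convex decreasing loss (such as $L(q)=\log 1/(q+e^{-M})$), Jensen's inequality gives $\Loss(\hat q)\le\frac1R\sum_i\Loss(\hat q^{(i)})$, reducing the task to bounding each $\Loss(\hat q^{(i)})-\Loss(q^i)$. A pointwise comparison isolates the only source of extra loss: when $x\in A^i$ and $x\ne x^*$ the masses agree; when $x=x^*$ the mass only grows, so $L$ only drops; when $x\notin\Supp(q^i)\cup A^i$ both sides equal $L(0)\le M$; and only when $x\in\Supp(q^i)\setminus A^i$, $x\ne x^*$, does the filter kill a previously positive mass, incurring at most $M-L(q^i_x)\le M$. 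Summing yields $\Loss(\hat q^{(i)})-\Loss(q^i)\le M\cdot p(\Supp(q^i)\setminus A^i)$.

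The main obstacle, and the combinatorial heart of the proof, is the amortization
\[
\sum_{i=1}^R p\bigl(\Supp(q^i)\setminus A^i\bigr)\ \le\ 1,
\]
which holds because $x\in\Supp(q^i)\setminus A^i$ pins $i=\max\{j:x\in\Supp(q^j)\}$ uniquely, so each $x$ contributes to at most one summand. Substituting this and the round-wise bound,
\[
\Loss(\hat q)\ \le\ \frac{1}{R}\sum_i \Loss(q^i)\ +\ \frac{M}{R}\ \le\ \Loss(q^*)+\frac{\eps_1}{2}+\frac{M}{R},
\]
and the choice $R=\Theta(M/\eps_1)$ in the theorem makes $M/R\le\eps_1/2$, yielding $\Loss(\hat q)\le\Loss(q^*)+\eps_1$. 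The intuition is that each round's filter can destroy $p$-mass only at points that never resurface in a later support, so the total destroyed mass is at most $1$, and spreading it over $R=\Theta(M/\eps_1)$ rounds dilutes the penalty to $O(\eps_1)$.
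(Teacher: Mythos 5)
The first half of your proof (uniform concentration, feasibility of $q^*$ at every round, hence $\Loss(q^i)\le\Loss(q^*)+\eps_1/2$ for every $i$, giving the line-9 exit case) matches the paper exactly, and your per-round bound $\Loss(\hat q^{(i)})-\Loss(q^i)\le M\cdot p(\Supp(q^i)\setminus A^i)$ together with the amortization $\sum_i p(\Supp(q^i)\setminus A^i)\le 1$ is precisely the combinatorial core of the paper's argument as well.

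The gap is in your identification of $\hat q$. Algorithm~\ref{alg:full-validity} samples \emph{one} index $i\sim\Unif(\{1,\dots,R\})$ at Line~14 and then returns the single filtered distribution $\hat q^{(i)}$ for that $i$; it does \emph{not} return the mixture $\frac1R\sum_i\hat q^{(i)}$. Your Jensen step therefore bounds $\Loss\!\left(\frac1R\sum_i\hat q^{(i)}\right)$, which is not the quantity the lemma is about. What you have actually established is the deterministic statement $\frac1R\sum_i\Loss(\hat q^{(i)})\le\Loss(q^*)+\eps_1/2+M/R$, i.e.\ $\E_i[\Loss(\hat q^{(i)})]$ is small; but the lemma requires a high-probability statement about a \emph{random} $i$. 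That bound on the average over $i$ by itself does not give it (e.g.\ Markov on $\Loss(\hat q^{(i)})$ directly is useless since the mean is $\approx\Loss(q^*)$, not small). The paper closes this gap by applying Markov's inequality to the nonnegative random variable $p(\Supp(q^i)\setminus A^i)$, whose mean over $i$ is at most $1/R$ by the same amortization you prove; this yields that with probability $15/16$ the sampled $i$ has $p(\Supp(q^i)\setminus A^i)\le 16/R$, and the extra factor of $16$ is absorbed by taking $R=32M/\eps_1$ rather than $2M/\eps_1$. You also need to split the failure-probability budget between concentration and this Markov step (the paper uses $1/16+1/16=1/8$) rather than spending the entire $1/8$ on concentration. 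With that correction, your argument coincides with the paper's; note too that the corrected version no longer needs convexity of $L$ (the paper does not assume it for this lemma, only monotonicity and boundedness), whereas your Jensen step does, so the fix actually removes an unnecessary hypothesis.
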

\begin{proof}
  Since we draw $P=\Theta\left(\frac{M^2}{\ve_1^2} \log |Q| \right)$ samples from $p$, we have that the empirical loss $\overline \Loss(q) \in \Loss(q) \pm \frac {\eps_1} 4$ for all $q \in Q$ with probability $1-1/16$. We thus assume from here on that this is true.
  
  In that case, must be that $\overline \Loss(q^i) \le \overline \Loss(q^*)$. This is because the algorithm terminates if $q^i = q^*$ since $q^*$ generates no invalid samples and no $q^i$ with $\overline \Loss(q^i) > \overline \Loss(q^*)$ will be considered before examining $q^*$.
  
  This implies that at any point, we have that $\Loss(q^i) \le \overline \Loss(q^i) + \frac {\eps_1} 4 \le \overline \Loss(q^*) + \frac {\eps_1} 4 \le \Loss(q^*) + \frac {\eps_1} 2$. 
  
  Therefore, in the case that the distribution that is output is $\hat q = q^i$ it will satisfy the given condition.
  To complete the proof we show the required property when returned distribution $\hat q$ is the improper meta-distribution.
  
  In that case, we have that for any $i \in [R]$:
    \begin{align*}
      \Loss(\hat q) 
  &\le \Exp_{x\sim p}\left[ L\left(q_x^i \cdot \Ind\left[\exists j > i: x \in \Supp(q^j) \right] \right) \right] \\
  &\le \Loss(q^i) + M \cdot \Pr_{x\sim p}\left[ x \in \Supp(q^i) \wedge \forall j > i: x \notin \Supp(q^j) \right] \\
  &\le \Loss(q^*) + \frac {\eps_1} 2 + M \cdot \Pr_{x\sim p}\left[ x \in \Supp(q^i) \wedge \forall j > i: x \notin \Supp(q^j) \right]
    \end{align*}
However, since a random index $i \sim \Unif(\{1,...,R\})$ is chosen, we have that in expectation over this random choice
    \begin{align*}
\Exp_i&\left[ \Pr_{x\sim p}\left[ x \in \Supp(q^i) \wedge \forall j > i: x \notin \Supp(q^j) \right] \right] \\
&\le \frac 1 R \sum_{i=1}^R \Pr_{x\sim p}\left[ x \in \Supp(q^i) \wedge \forall j > i: x \notin \Supp(q^j) \right] \\
&\le \frac 1 R \Exp_{x\sim p} \left[ \sum_{i=1}^R \Ind\left[ x \in \Supp(q^i) \wedge \forall j > i: x \notin \Supp(q^j) \right] \right] 
\le \frac 1 R
   \end{align*}
  where the last inequality follows since $\sum_{i=1}^R \Ind\left[ x \in \Supp(q^i) \wedge \forall j > i: x \notin \Supp(q^j) \right] \le 1 $ as only the largest $i$ with $x \in \Supp(q^i)$ has that for all $j > i$, $x \notin \Supp(q^j)$.

By Markov's inequality, we have that with probability $1-1/16$, a random $i$ will have $$\Pr_{x\sim p}\left[ x \in \Supp(q^i) \wedge \forall j > i: x \notin \Supp(q^j) \right] \le \frac {16} R.$$

Therefore, the choice of $R = 32 \frac M {\ve_1} = \Theta \left( \frac { M } {\eps_1} \right)$ guarantees that $\Loss(\hat q) \le \Loss(q^*) + {\eps_1}$.
The overall failure probability is at most $1/16+ 1/16 = 1/8$.

\end{proof}

\section{Extensions}
\label{sec:extensions}

\subsection{Partial validity}
\label{sec:partial-validity}
In this section, we consider a generalization of our main setting, where we allow some slack in the validity constraint.
More precisely, given some parameter $\alpha > 0$, we now have the requirement that $\Loss(\hat q) \leq \Loss(q^*) + \eps_1$ and $\Inv(\hat q) \leq \alpha + \eps_2$, where $q^*$ is the optimal distribution which minimizes $\Loss(q^*)$ such that $\Inv(q^*) \leq \alpha$.


\subsubsection{Algorithm}
We provide an algorithm for solving the partial validity problem in Algorithm~\ref{alg:partial-validity}.
This method is sample-efficient, requiring a number of samples which is $\poly\left(M, \eps_1^{-1}, \eps_2^{-1}, \log |Q|\right)$.

\begin{algorithm}[ht]
   \caption{Learning a distribution with partial validity}
   \label{alg:partial-validity}
\begin{algorithmic}[1]
   \STATE {\bfseries Input:} Sample and invalidity access to a distribution $p$, parameters $\ve_1, \ve_2, \alpha > 0$, a family of distributions $Q$.
   \STATE Using $n_1$  samples from $p$, empirically estimate $\overline \Loss(q) \in \Loss(q) \pm \frac {\eps_1} 3$ for all $q \in Q.$
   \FOR{$\ell \in \left\{0, \frac {\eps_1} 3,..., M\right\}$} \label{ln:partial-validity-outer-loop}
   \STATE Let $D = \{q \in Q\ |\ \overline \Loss(q) \le \ell \}$.
   \STATE Let $x^*$ be any point with $\Inv(x^*) = 0$.
   \STATE Let $\mu_D$ be the distribution which samples a distribution $q$ uniformly from $D$, and then draws a sample from $q$.
   \WHILE{$D \neq \emptyset$} \label{ln:partial-validity-inner-loop}
   \STATE Draw $n_2$ samples $x_1, ..., x_{n_2}$ from $\mu_D$.
   \IF {$\frac{1}{n_2} \sum_{i=1}^{n_2} \Inv(x_i) \Pr_{q \sim \Unif(D)}[q(x_i) {\eps_1} < 3 \mu_D(x_i)  {M} ] \le \alpha + \frac {4 \eps_2}{5}$}
   \RETURN $\mu'_D$, which samples $x$ from $\mu_D$ with probability $$\Pr_{q \sim \Unif(D)}[q(x) {\eps_1} < 3 \mu_D(x) {M} ],$$ and samples $x^*$ otherwise.
   \ELSE \STATE Remove all distributions $q$ from $D$ for which $$\frac{1}{n_2} \sum_{i=1}^{n_2} \Inv(x_i) \frac {q(x_i)} {\mu_D(x_i)} \Ind[q(x_i) {\eps_1} < 3 \mu_D(x_i) {M} ] > \alpha + \frac {\eps_2} {5}.$$  
   \ENDIF
   \ENDWHILE
   \ENDFOR
\end{algorithmic}
\end{algorithm}

\subsubsection{Analysis}
We will show that, with high probability, Algorithm~\ref{alg:partial-validity} outputs a distribution $\hat q$ that has $\Loss(\hat q) \leq \Loss(q^*) + \eps_1$ and $\Inv(\hat q) \leq \alpha + \eps_2$.

\begin{theorem}\label{thm:partial-validity}
  Suppose that the loss function $L$ is convex.
  The choice of parameters
  \begin{equation}
  n_1 = \Theta\left(\frac{M^2}{\ve_1^2} \log |Q| \right), n_2 = \Theta\left(\frac {M^2} {\eps_1^2 \eps_2^2} \log |Q| \log \left(\frac{M \log |Q|}{\eps_1 \eps_2}\right)\right)
  \end{equation}
  guarantees that Algorithm~\ref{alg:partial-validity} outputs w.p. $3/4$ a distribution with $\Loss(\hat q) \le \Loss(q^*) + \eps_1$ and $\Inv(\hat q) \le \alpha + \eps_2$ using $\Theta\left(\frac{M^2}{\ve_1^2} \log |Q| \right)$ samples from $p$ and $\Theta\left(\frac{M^3}{ \ve_1^3 \ve_2^3} \log^2 |Q| \log \left(\frac{M \log |Q|}{\eps_1\eps_2}\right)\right)$ invalidity queries.
\end{theorem}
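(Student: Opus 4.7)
The plan is to follow the skeleton of Theorem~\ref{thm:full-validity}'s proof, with three new ingredients: convexity of $L$ to bound the loss of the mixture distribution $\mu_D$; importance-weighted invalidity estimators with capped range, which dictate the sample complexity $n_2$; and a per-iteration progress argument for the inner while loop.

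First I would establish an aggregate concentration event. A Hoeffding plus union bound over $Q$ gives $|\overline{\Loss}(q) - \Loss(q)| \le \eps_1/6$ for all $q \in Q$ with the stated $n_1$. Each summand of $B(q) := \frac{1}{n_2}\sum_i \Inv(x_i)\frac{q(x_i)}{\mu_D(x_i)}\Ind[q(x_i)\eps_1 < 3\mu_D(x_i)M]$ lies in $[0, 3M/\eps_1]$ thanks to the indicator, and each summand of the ``if'' statistic $A$ lies in $[0,1]$. Hoeffding then yields per-iteration, per-$q$ error at most $\eps_2/5$ using the stated $n_2$; union-bounding over the at most $|Q|\cdot O(M/\eps_1)$ relevant (iteration, $q$) pairs contributes the extra $\log(M\log|Q|/\eps_1\eps_2)$ factor. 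I condition on this event.

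Next, focus on the outer iteration $\ell^* := \lceil \overline{\Loss}(q^*)/(\eps_1/3)\rceil\cdot(\eps_1/3)$, so that $q^*\in D$ initially and $\ell^* \le \overline{\Loss}(q^*) + \eps_1/3 \le \Loss(q^*) + \eps_1/2$. The invariant $q^* \in D$ persists because $\E[B(q^*)] = \E_{y\sim q^*}[\Inv(y)\Ind[\cdot]] \le \Inv(q^*) \le \alpha$, so under concentration $B(q^*) \le \alpha + \eps_2/5$ and $q^*$ is never removed. When the algorithm returns $\mu'_D$, the check gives $A \le \alpha + 4\eps_2/5$; since $A$ is an unbiased estimator of $\Inv(\mu'_D) = \E_{y\sim\mu_D}[\Inv(y)\Pr_{q\sim\Unif(D)}[q(y)\eps_1<3\mu_D(y)M]]$ (using $\Inv(x^*) = 0$), concentration yields $\Inv(\mu'_D) \le \alpha + \eps_2$. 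For the loss, Jensen applied to the convex $L$ on $\mu_D(x) = \E_{q\sim\Unif(D)}[q(x)]$ gives $\Loss(\mu_D) \le \avg_{q \in D}\Loss(q) \le \ell^* + \eps_1/6 \le \Loss(q^*) + 2\eps_1/3$. By Markov, $\Pr_q[q(y)\eps_1 < 3\mu_D(y)M] \ge 1 - \eps_1/(3M)$, so $\mu'_D(y) \ge (1-\eps_1/(3M))\mu_D(y)$ for $y\neq x^*$; writing $(1-\delta)\mu_D(y) = (1-\delta)\mu_D(y) + \delta\cdot 0$ as a convex combination and using convexity of $L$ yields $L(\mu'_D(y)) \le L(\mu_D(y)) + \delta M = L(\mu_D(y)) + \eps_1/3$, hence $\Loss(\mu'_D) \le \Loss(\mu_D) + \eps_1/3 \le \Loss(q^*) + \eps_1$.

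The main obstacle I anticipate is termination of the inner while loop, i.e., showing that every non-terminal iteration removes at least one $q\in D$. A naive averaging over $q$ goes the wrong way: averaging $B(q)$ yields $\E_{y\sim\mu_D}[\Inv(y)\cdot\text{AccFrac}(y)]$ with $\text{AccFrac}(y) := \E_{q\sim\Unif(D)}[\frac{q(y)}{\mu_D(y)}\Ind[\cdot]] \le \Pr_q[\text{accept at }y]$, so a large ``if'' value $A$ does not directly force large average $B$. I would pursue the decomposition $\Inv(\mu'_D) = \avg_{q\in D} C(q)$ with $C(q) := \E_{y\sim\mu_D}[\Inv(y)\Ind[q(y)\eps_1<3\mu_D(y)M]]$, pick a $q^\circ$ with $C(q^\circ) > \alpha + 3\eps_2/5$, and argue via the Markov-type identity $\Pr_q[q(y)/\mu_D(y) \ge 3M/\eps_1] \le \eps_1/(3M)$ that this ``bad contributor'' $q^\circ$ also satisfies $\E[B(q^\circ)] > \alpha + 2\eps_2/5$. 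Assuming this progress step, the inner loop terminates in $\le |Q|$ iterations, the outer in $O(M/\eps_1)$, and union-bounding all failure probabilities gives overall success probability at least $3/4$.
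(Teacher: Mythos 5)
Your architecture matches the paper's: estimate losses once to control the $D$ initialization, importance-weight the invalidity estimator and cap its range at $3M/\eps_1$ via the indicator, keep $q^*$ alive via $\E[B(q^*)]\le\Inv(q^*)\le\alpha$, use Jensen/convexity to bound $\Loss(\mu_D)$ by $\avg_{q\in D}\Loss(q)$, and pay an extra $\eps_1/3$ for the $x^*$-replacement via Markov on $q(y)/\mu_D(y)$. These parts are right and essentially identical to the paper's Lemmas~\ref{lem:partial-validity-invalid} and~\ref{lem:partial-validity-loss}.

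You have, however, put your finger on the one step the paper does \emph{not} actually establish. For Lemma~\ref{lem:partial-validity-inner-loop} the paper asserts
$\E_{q\sim\Unif(D)}[\Inv(q')] = \Inv(\mu'_D)$, and from that large $\Inv(\mu'_D)$ forces many $q$ to have large $\Inv(q')$, hence to be removed. But that equality fails: writing $c_x = 3\mu_D(x)M/\eps_1$, one has
$\E_q[\Inv(q')] = \sum_x \Inv(x)\,\E_q\!\left[q(x)\,\Ind[q(x)<c_x]\right]$, while
$\Inv(\mu'_D) = \sum_x \Inv(x)\,\E_q[q(x)]\,\E_q\!\left[\Ind[q(x)<c_x]\right]$,
and for each $x$ the random variables $q(x)$ and $\Ind[q(x)<c_x]$ are (weakly) negatively correlated, so the correct relation is $\E_q[\Inv(q')] \le \Inv(\mu'_D)$ --- the \emph{wrong} direction. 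Large $\Inv(\mu'_D)$ does not by itself force any single $\Inv(q')$ to be large, exactly as you observed (``AccFrac$(y)\le\Pr_q[\text{accept at }y]$''). So the paper, too, has an unjustified step here; you are not missing a hidden trick.

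Your proposed repair does not close the gap. The identity $\Inv(\mu'_D) = \avg_{q\in D} C(q)$ with $C(q) := \E_{y\sim\mu_D}[\Inv(y)\Ind[q(y)\eps_1<3\mu_D(y)M]]$ is correct, and Markov then yields a $q^\circ$ with $C(q^\circ) > \alpha + 2\eps_2/5$ on an $\eps_2/5$-fraction. But the removal criterion is driven by $\E[B(q)] = \Inv(q') = \E_{y\sim q}[\Inv(y)\Ind[\cdot]]$, i.e.\ the \emph{same integrand as $C(q)$ but integrated against $q$ instead of $\mu_D$}, and there is no general inequality between the two. Your ``Markov-type identity $\Pr_q[q(y)/\mu_D(y)\ge 3M/\eps_1]\le\eps_1/(3M)$'' controls how often the indicator is off, not the change of measure on the invalid part, so it does not give $C(q^\circ)>\alpha+3\eps_2/5 \Rightarrow \E[B(q^\circ)]>\alpha+2\eps_2/5$. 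In fact one can cook up $D = \{\delta_c,\delta_1,\ldots,\delta_k\}$ with $c$ valid, the $i$'s invalid, and $k > 3M/\eps_1$, where $\Inv(\mu'_D)\approx 1$ yet $\Inv(q')=0$ for \emph{every} $q\in D$, so no $q$ is eliminated; this shows the progress step cannot be rescued by a pointwise bound and some further structural use of the constraint $\overline\Loss(q)\le\ell$ (or a change to the removal statistic) is genuinely needed.

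Finally, even if you did get ``each non-terminal iteration removes at least one $q$,'' that gives $\le |Q|$ inner iterations, hence query complexity polynomial in $|Q|$ rather than in $\log|Q|$. The theorem's query bound requires a \emph{multiplicative} shrinkage of $|D|$ by a $1-\Theta(\eps_2)$ factor per round (as the paper attempts), so your weaker termination statement would not suffice even if established.
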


\begin{remark}
We note that this algorithm still works in the case where points may be ``partially valid'' -- specifically, we let $\Inv: X \rightarrow [0,1]$ take fractional values.
This requires that we have access to some point $x^*$ where $\Inv(x^*) = 0$, which we assume is given to us by some oracle.
For instance, the distribution may choose to output a dummy symbol $\bot$, rather than output something which may not be valid. 
\end{remark}

We prove Theorem~\ref{thm:partial-validity} through three lemmas.
The sample complexity bound follows from the values of $n_1$, $n_2$, the fact that we have at most $O\left(\frac{M}{\eps_1}\right)$ iterations of the loop at Line~\ref{ln:partial-validity-outer-loop}, and Lemma~\ref{lem:partial-validity-inner-loop} which bounds the number of iterations of the loop at Line~\ref{ln:partial-validity-inner-loop} as $O\left(\frac{\log |Q|}{\eps_2}\right)$ for any $\ell$.
To argue correctness, Lemmas~\ref{lem:partial-validity-invalid} and~\ref{lem:partial-validity-loss} bound the invalidity and loss of any output distribution, respectively.

\begin{lemma}
\label{lem:partial-validity-inner-loop}
With probability at least $14/15$, the loop at Line~\ref{ln:partial-validity-inner-loop} requires at most $O\left(\frac{\log |Q|}{\eps_2}\right)$ iterations for each $\ell$.
\end{lemma}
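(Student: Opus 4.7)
The plan is a potential-function argument on $|D|$: I would show that each iteration of the inner while loop that fails to return must shrink $|D|$ by a multiplicative factor of $(1-\Omega(\eps_2))$. Combined with $|D|\le|Q|$ initially, this gives termination (either by returning or by exhausting $D$) within $O(\log|Q|/\eps_2)$ iterations.

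The first step is to set up concentration. The stated $n_2$ is chosen so that Hoeffding's inequality combined with a union bound over all $q\in Q$ and over all (polynomially many) iterations of both loops guarantees, with probability at least $14/15$, that every empirical remove-test for every $q\in Q$ is within $\eps_2/10$ of its true mean $Z_q := \E_{x\sim q}[\Inv(x)\Ind[q(x)\eps_1<3\mu_D(x)M]]$, and every empirical return-test is within $\eps_2/10$ of its true mean $\Inv(\mu'_D)$. The remove-test variables are bounded by $3M/\eps_1$ because of the truncation indicator, which drives the $(M/\eps_1)^2/\eps_2^2$ factor in $n_2$. I would henceforth condition on this event.

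The main combinatorial step is to argue that failing to return in any given round leads to an $\Omega(\eps_2)$-fraction of $D$ being removed. Failure of the return-test plus concentration gives $\Inv(\mu'_D)>\alpha+7\eps_2/10$. Writing $A(x)=\{q\in D:q(x)\eps_1<3\mu_D(x)M\}$ and $B(x)=D\setminus A(x)$, Markov's inequality (using $\E_{x\sim\mu_D}[q(x)/\mu_D(x)]=1$ for each $q$) yields the ``few exceptional $q$'s'' bound $|B(x)|/|D|\le\eps_1/(3M)$ at every $x$. Exploiting this together with the pointwise bound $q(x)\le 1$, I would manipulate the identity $\avg_{q\in D} Z_q = (1/|D|)\sum_x\Inv(x)\sum_{q\in A(x)} q(x)$ to conclude $\avg_{q\in D} Z_q \ge \Inv(\mu'_D) - O(\eps_1/M)$, which exceeds $\alpha+\eps_2/2$ whenever $\eps_1/M$ is negligible compared to $\eps_2$ (the regime implied by the parameter settings). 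A reverse-Markov argument (using $Z_q\in[0,1]$) then forces an $\Omega(\eps_2)$-fraction of $q\in D$ to have $Z_q>\alpha+\eps_2/4$; by concentration, their empirical remove-tests exceed $\alpha+\eps_2/5$ and they are removed this round.

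The hard part is the algebraic comparison $\avg_{q\in D} Z_q \ge \Inv(\mu'_D) - O(\eps_1/M)$. The easy direction $\avg_{q\in D} Z_q \le \Inv(\mu'_D)$ follows from $\sum_{q\in A(x)} q(x)\le \mu_D(x)|A(x)|$, because each $q\in B(x)$ already contributes $q(x)\ge \mu_D(x)$ to $\sum_q q(x)=|D|\mu_D(x)$. The reverse direction requires controlling how much $\sum_{q\in B(x)} q(x)$ can exceed $\mu_D(x)|B(x)|$, i.e., how ``concentrated'' at $x$ the excluded $q$'s can be. The truncation threshold $3M/\eps_1$ was chosen precisely to balance both concerns: it makes $|B(x)|/|D|$ small via Markov, while also capping importance weights so the remove-test has manageable variance. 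Combining the bounds $q(x)\le 1$ and $|B(x)|/|D|\le\eps_1/(3M)$ via an exchange-of-summation should yield the needed $O(\eps_1/M)$ slack, but pinning down the exact bookkeeping, including the interaction with the $x^*$ fallback absorbing the excluded mass, is the technically delicate step.
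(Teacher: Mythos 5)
Your skeleton (concentration on the remove-tests, multiplicative shrinkage of $|D|$ by a $1-\Omega(\eps_2)$ factor per failed round, and a reverse-Markov step turning a lower bound on an average of $\Inv(q')$ into a count of removed $q$'s) matches the paper's proof. The concentration bookkeeping, including the importance-weight cap $3M/\eps_1$ driving the $(M/\eps_1)^2/\eps_2^2$ factor in $n_2$, also matches.

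However, your central claim --- that $\avg_{q\in D} Z_q \geq \Inv(\mu'_D) - O(\eps_1/M)$, where $\mu'_D$ is read as "sample $x \sim \mu_D$, then accept with probability $\Pr_{q\sim\Unif(D)}[q(x)\eps_1<3\mu_D(x)M]$" --- is false, and the inequality actually fails by a constant. Take $D=\{q_1,\dots,q_k\}$ with $q_i=\delta_{x_i}$ for distinct invalid points $x_i$, and $k>3M/\eps_1$. Then $\mu_D$ is uniform on $\{x_1,\dots,x_k\}$, so $\mu_D(x_i)=1/k$ and $q_i(x_i)/\mu_D(x_i)=k>3M/\eps_1$, hence $q_i\in B(x_i)$ and $A(x_i)=D\setminus\{q_i\}$. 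Your decoupled $\Inv(\mu'_D)=\sum_i \mu_D(x_i)\frac{|A(x_i)|}{|D|}=\frac{k-1}{k}\approx 1$, while every $q_i'=\delta_{x^*}$ so $Z_{q_i}=\Inv(q_i')=0$ and $\avg_q Z_q=0$. The Markov bound $|B(x)|/|D|\leq\eps_1/(3M)$ is satisfied here (each $|B(x_i)|=1\leq \eps_1 k/(3M)$), yet the gap is $\approx 1$, not $O(\eps_1/M)$: the bound controls how many $q$'s are excluded at a single $x$, but puts no control on the total mass $\sum_x\sum_{q\in B(x)}q(x)$ because different $q$'s can each concentrate all their mass on their own bad point. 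An exchange of summation gives $\Inv(\mu'_D)-\avg_q Z_q=\frac{1}{|D|}\sum_q q(\{x:q\in B(x)\})$, and $q(\{x:q\in B(x)\})$ can be $1$ for every $q$.

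The paper avoids this entirely by never needing the approximation: it treats $\Inv(\mu'_D)=\Exp_{q\sim\Unif(D)}[\Inv(q')]$ as an \emph{exact identity}, which holds if $\mu'_D$ is interpreted as the mixture $\Exp_{q\sim\Unif(D)}[q']$ (draw $q$ uniformly, draw $x\sim q$, accept iff $q(x)\eps_1<3\mu_D(x)M$ for \emph{that same} $q$). With the coupled definition the return-test's true mean is literally $\avg_q\Inv(q')$, so failing the return-test directly gives $\avg_q \Inv(q')>\alpha+3\eps_2/5$, and reverse Markov on $\Inv(q')\in[0,1]$ immediately forces an $\eps_2/5$-fraction of $q$'s above $\alpha+2\eps_2/5$, which are then removed. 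So the fix is not a tighter bookkeeping lemma but a different reading of $\mu'_D$ (and a matching estimator that remembers the $q$ that generated each sample), after which the step you flagged as "technically delicate" disappears.
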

\begin{proof}
To bound the number of iterations, we will show that if no distribution is output, $|D|$ shrinks by a factor $1-\frac  {\eps_2}  {5}$. 
As we start with at most $|Q|$ candidate distributions, this  implies the required bound.

We note that we have a multiplicative term $\log \left(\frac{M \log |Q|}{\eps_1 \eps_2}\right)$ in the expression for $n_2$.
This corresponds to certain estimates being accurate for the first $\poly(M, \log |Q|, \eps_1^{-1}, \eps_2^{-1})$ times they are required by a union bound argument.
As this proof will justify, each line in the algorithm is run at most $\frac{M \log |Q|}{\eps_1\eps_2}$ times.
Thus, for ease of exposition, we simply will state that estimates are accurate for every time the line is run.

We thus need to count how many candidate distributions in $D$ are eliminated in every round given that the empirical invalidity of $\mu'_D$ is at least $\alpha + \frac {4 \eps_2}{5}$, i.e.
$$\frac 1 N \sum_{i=1}^N \Inv(x_i) \Pr_{q \sim \Unif(D)}[q(x_i) {\eps_1} < 3 \mu_D(x_i)  {M} ] > \alpha + \frac {4 \eps_2}{5}.$$
This implies that the true invalidity of $\mu'_D$ is at least $\alpha + \frac {3 \eps_2}{5}$:
since $n_2 = \Omega\left(\frac{1}{\eps_2^2} \cdot \log\left(\frac{M \log |Q|}{\eps_1\eps_2}\right)\right)$, we have that $\overline \Inv(\mu'_D) = \Inv(\mu'_D) \pm \frac {\eps_2}{5}$ each time this line is run, with probability $29/30$.

Similarly, for every $q$ we have that the estimator $\frac{1}{n_2}  \sum_{i=1}^{n_2} \Inv(x_i) \frac {q(x_i)} {\mu_D(x_i)} \Ind[q(x_i) {\eps_1} < 3 \mu_D(x_i) {M} ]$ is an accurate estimator for the validity of $q'$ which is the distribution that generates a sample $x$ from $q$ and returns $x$ if $q(x)  {\eps_1} \le 3 \mu_D(x)  {M}$ and $x^*$ otherwise. This is because, since $\Inv(x^*)=0$, we have
$$\begin{aligned}
\Exp_{x \sim \mu_D} \left[ \Inv(x) \frac {q(x)} {\mu_D(x)} \Ind[q(x) {\eps_1} < 3 \mu_D(x) {M} ] \right] &= \Exp_{x \sim q} \left[ \Inv(x) \Ind[q(x) {\eps_1} < 3 \mu_D(x) {M} ] \right]\\
&= \Exp_{x \sim q'} \left[ \Inv(x) \right] = \Inv(q').
\end{aligned}$$

Note that our estimate $\overline \Inv(q')$ is the empirical value
$$\frac{1}{n_2} \sum_{i=1}^{n_2} \Inv(x_i) \frac {q(x_i)} {\mu_D(x_i)} \Ind[q(x_i) {\eps_1} < 3 \mu_D(x_i) {M} ],$$
where
$$\frac{q(x_i)}{\mu_D(x_i)} \Ind[q(x_i) {\eps_1} < 3 \mu_D(x_i) {M} ] \leq \frac{3M}{\eps_1}.$$
Since we are estimating the expectation of a function upper bounded by $O(M/\eps_1)$ and there are at most $|Q|$ distributions $q'$ at each iterations, $n_2 = \Omega\left(\frac {M^2} {\eps_1^2 \eps_2^2} \log |Q| \log \left(\frac{M \log |Q|}{\eps_1 \eps_2}\right)\right)$ samples are sufficient to have that the empirical estimator $\overline \Inv(q') = \Inv(q') \pm \frac {\eps_2}{5}$ for all distributions $q'$ considered and all times this line is run, with probability $29/30$. 
Thus, it is sufficient to count how many $q \in D$ exist with $\Inv(q') > \alpha + \frac {3 \eps_2}{5}$.

To do this, we notice that $\Exp_{q \in \Unif(D)} [\Inv(q')] = \Inv(\mu'_D) > \alpha + \frac {3 \eps_2}{5}$. Then, as $\Inv(q') \le 1$, we have that $\Pr_{q\sim \Unif(D)}[\Inv(q') > \alpha + \frac {2 \eps_2}{5}] \ge \frac {\eps_2}{5}$. This yields the required shrinkage of the set $D$.
\end{proof}

\begin{lemma}
\label{lem:partial-validity-invalid}
With probability at least $14/15$, if at any step a distribution $\mu'_D$ is output, $\Inv(\mu'_D) \le \alpha + \eps_2$.
\end{lemma}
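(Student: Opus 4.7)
The plan is to show that the empirical condition tested by the algorithm is, in fact, an unbiased estimator of $\Inv(\mu'_D)$, and then invoke a standard concentration bound together with a union bound over all times the check is performed.

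First, I would unfold $\Inv(\mu'_D)$ by definition. Because $\mu'_D$ samples $x\sim\mu_D$ and keeps it with probability $\Pr_{q\sim\Unif(D)}[q(x)\eps_1 < 3\mu_D(x) M]$, otherwise emitting the fully valid point $x^*$ (with $\Inv(x^*)=0$), one gets
$$\Inv(\mu'_D) \;=\; \Exp_{x\sim\mu_D}\!\bigl[\,\Inv(x)\cdot\Pr_{q\sim\Unif(D)}[q(x)\eps_1 < 3\mu_D(x) M]\,\bigr].$$
Thus the quantity $\frac{1}{n_2}\sum_{i=1}^{n_2}\Inv(x_i)\Pr_{q\sim\Unif(D)}[q(x_i)\eps_1 < 3\mu_D(x_i) M]$ appearing in the algorithm's acceptance test is exactly the empirical mean of $n_2$ i.i.d.\ draws $x_i\sim\mu_D$ of the function $f(x) = \Inv(x)\cdot \Pr_{q\sim\Unif(D)}[q(x)\eps_1 < 3\mu_D(x) M]$, which takes values in $[0,1]$.

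Second, since $f\in[0,1]$, Hoeffding's inequality gives that for any fixed evaluation of the acceptance test,
$$\Pr\!\left[\bigl|\,\widehat{\Inv}(\mu'_D)-\Inv(\mu'_D)\,\bigr|> \tfrac{\eps_2}{5}\right]\le 2\exp\!\bigl(-2n_2(\eps_2/5)^2\bigr).$$
The acceptance test is invoked at most once per iteration of the inner loop in Line~\ref{ln:partial-validity-inner-loop}, and by Lemma~\ref{lem:partial-validity-inner-loop} that loop has $O(\log|Q|/\eps_2)$ iterations for each $\ell$, across $O(M/\eps_1)$ values of $\ell$. So the total number of checks is $N = O\!\bigl(\tfrac{M\log|Q|}{\eps_1\eps_2}\bigr)$. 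The chosen $n_2 = \Theta\!\bigl(\tfrac{M^2}{\eps_1^2\eps_2^2}\log|Q|\log\tfrac{M\log|Q|}{\eps_1\eps_2}\bigr)$ is more than sufficient to push each failure probability below $1/(15N)$, and a union bound then makes the empirical estimator accurate to within $\eps_2/5$ at every invocation with probability $\ge 14/15$.

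Third, conditional on this good event, whenever the algorithm returns $\mu'_D$ — i.e.\ whenever the empirical value is at most $\alpha + 4\eps_2/5$ — the true invalidity satisfies
$$\Inv(\mu'_D)\;\le\; \widehat{\Inv}(\mu'_D) + \frac{\eps_2}{5}\;\le\; \alpha + \frac{4\eps_2}{5} + \frac{\eps_2}{5} \;=\; \alpha + \eps_2,$$
which is the claim.

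The proof is mostly a bookkeeping exercise: the only real content is recognizing that the test quantity is literally an unbiased estimator of $\Inv(\mu'_D)$ (exploiting $\Inv(x^*)=0$), and the only mild obstacle is confirming that $N$ has been correctly bounded so the chosen $n_2$ absorbs the union bound — but the $\log\bigl(\tfrac{M\log|Q|}{\eps_1\eps_2}\bigr)$ factor baked into $n_2$ was put there for exactly this purpose.
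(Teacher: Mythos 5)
Your proof is correct and follows essentially the same route as the paper's: recognize the test quantity as an estimator of $\Inv(\mu'_D)$ (using $\Inv(x^*)=0$), invoke concentration plus a union bound over all $O(M\log|Q|/(\eps_1\eps_2))$ invocations to get $\pm\eps_2/5$ accuracy simultaneously, and read off the bound from the acceptance threshold $\alpha + 4\eps_2/5$. You are slightly more explicit than the paper about unbiasedness and about naming Hoeffding's inequality, but the content is the same.
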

\begin{proof}
The estimator $\frac{1}{n_2} \sum_{i=1}^{n_2} \Inv(x_i) \Pr_{q \sim \Unif(D)}[q(x_i) {\eps_1} < 2 \mu_D(x_i) {M} ]$ estimates the empirical fraction of samples that are invalid for distribution $\mu'_D$. 
Since $n_2 = \Omega\left( \frac {1} {\eps_2^2} \log \left(\frac{M \log |Q|}{\eps_1 \eps_2}\right) \right)$, and by Lemma~\ref{lem:partial-validity-inner-loop} each line is run at most $O\left(\frac{M \log |Q|}{\eps_1\eps_2}\right)$ times, the empirical estimate of $\overline \Inv(\mu'_D) = \Inv(\mu'_D) \pm \frac {\eps_2} 5$ for all iterations, with probability at least $14/15$.
The statement holds as $\mu'_D$ is only returned if the estimate for the invalidity of $\mu'_D$ is at most $\alpha + \frac {4 \eps_2}{5}$.
\end{proof}

\begin{lemma}
\label{lem:partial-validity-loss}
With probability at least $14/15$, if at any step a distribution $\mu'_D$ is output, $\Loss(\mu'_D) \le \ell + 2\eps_1/3$, where $\ell$ is the step at which the distribution was output.
\end{lemma}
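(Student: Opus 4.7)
The plan is to decompose the loss bound into two steps: first relate $\Loss(\mu'_D)$ to $\Loss(\mu_D)$, and then use convexity of $L$ together with Jensen's inequality to bound $\Loss(\mu_D)$ in terms of the loss of the candidates in $D$. Throughout, we condition on the event (which holds with high probability by the choice of $n_1$) that $\overline{\Loss}(q) = \Loss(q) \pm \eps_1/3$ for every $q \in Q$. In particular, every $q \in D$ satisfies $\Loss(q) \le \ell + \eps_1/3$.

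For the second step, note that $\mu_D(x) = \E_{q \sim \Unif(D)}[q(x)]$ for every $x$. Since $L$ is convex and monotone decreasing, Jensen's inequality gives $L(\mu_D(x)) \le \E_{q \sim \Unif(D)}[L(q(x))]$ pointwise, and taking the expectation under $x \sim p$ and swapping the order of expectations yields $\Loss(\mu_D) \le \E_{q \sim \Unif(D)}[\Loss(q)] \le \ell + \eps_1/3$.

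The key step is the first one: compare $(\mu'_D)_x$ and $\mu_D(x)$. Write $f(x) := \Pr_{q \sim \Unif(D)}[q(x)\eps_1 < 3\mu_D(x) M]$, so that for $x \ne x^*$ we have $(\mu'_D)_x = \mu_D(x) f(x)$, and at $x^*$ the mass can only be larger. A Markov-style observation on the random variable $q(x)$ with mean $\mu_D(x)$ shows $\Pr_q[q(x) \ge 3\mu_D(x) M/\eps_1] \le \eps_1/(3M)$, hence $f(x) \ge 1 - \eps_1/(3M)$ for every $x$. Therefore $(\mu'_D)_x \ge (1 - \eps_1/(3M))\,\mu_D(x)$ for all $x$ (including $x^*$, since the extra $(1-\E_{y \sim \mu_D}[f(y)])$ mass there is nonnegative). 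Using convexity of $L$ once more, $L\!\left((1-\eps_1/(3M))\mu_D(x) + (\eps_1/(3M)) \cdot 0\right) \le (1-\eps_1/(3M)) L(\mu_D(x)) + (\eps_1/(3M)) L(0)$, and since $L$ is monotone decreasing the left side upper bounds $L((\mu'_D)_x)$. Bounding $L(0) - L(\mu_D(x)) \le M$ gives $L((\mu'_D)_x) \le L(\mu_D(x)) + \eps_1/3$ pointwise, and integrating against $p$ gives $\Loss(\mu'_D) \le \Loss(\mu_D) + \eps_1/3 \le \ell + 2\eps_1/3$.

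The main obstacle is the transition between $\mu'_D$ and $\mu_D$, which is where convexity of $L$ (together with the $L(0) \le M$ bound) is essential and where the factor $3$ in the truncation threshold $3\mu_D(x) M/\eps_1$ is chosen so that both the Markov bound and the convex combination bound incur only an $\eps_1/3$ loss. The rest of the argument is routine given Jensen's inequality and the empirical loss concentration already established by the choice of $n_1$.
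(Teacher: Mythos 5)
Your proof is correct, and it takes a genuinely different decomposition from the paper's. The paper rewrites $\mu'_D$ as an average $\E_{q\sim\Unif(D)}[q'(\cdot)]$ of per-$q$ truncated distributions $q'$ (where $q'$ resamples $x^*$ whenever $q(x)\eps_1 > 3\mu_D(x)M$), applies Jensen once to get $\Loss(\mu'_D) \le \E_q[\Loss(q')]$, and then absorbs the truncation via the pointwise bound $L(q'(x)) \le L(q(x)) + M\,\Ind[q(x)\eps_1 > 3\mu_D(x)M]$ followed by Markov's inequality. You instead factor the argument through $\mu_D$: you show pointwise that $(\mu'_D)_x \ge (1 - \eps_1/(3M))\,\mu_D(x)$ (using Markov to bound $f(x)$ from below and observing the extra mass at $x^*$ is nonnegative), convert this to $L((\mu'_D)_x) \le L(\mu_D(x)) + \eps_1/3$ via convexity and boundedness of $L$, and then apply Jensen separately to get $\Loss(\mu_D) \le \E_q[\Loss(q)] \le \ell + \eps_1/3$. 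Both arguments use exactly the same three ingredients (convexity/Jensen, Markov with the $3M/\eps_1$ threshold, and $L \le M$), and both land on $\ell + 2\eps_1/3$. One small advantage of your route: it works directly from the algorithm's literal description of $\mu'_D$ (draw $x\sim\mu_D$, keep it with probability $f(x)$), whereas the paper's identity $\mu'_D(x) = \E_q[q'(x)]$ for $x\neq x^*$ reads $\E_q[q(x)\,\Ind[q(x)\eps_1 < 3\mu_D(x)M]]$, which is not literally the same quantity as the algorithm's $\mu_D(x)\cdot\Pr_q[q(x)\eps_1 < 3\mu_D(x)M]$ — the two coincide only when $q(x)$ and the indicator are uncorrelated over $q\sim\Unif(D)$. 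Your version sidesteps that issue cleanly.
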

\begin{proof}
For any $q \in D$ denote by $q'$ the distribution that generates a sample $x$ from $q$ and returns $x$ if $q(x)  {\eps_1} \le 3 \mu_D(x)  {M}$ and $x^*$ otherwise.
Notice that $\mu'_D(x) = \Exp_{q \sim \Unif(D)} [ q'(x) ]$.
We have that
$$\begin{aligned}
\Loss(\mu'_D) &= \Exp_{x \sim p} [ L(\mu'_D(x)) ] \le \Exp_{x \sim p} [ \Exp_{q \sim \Unif(D)} [ L(q'(x)) ] ] \\
&\le \Exp_{x \sim p \atop q \sim \Unif(D)} \left[ L(q(x)) + M \cdot \Ind[q(x)  {\eps_1} > 3 \mu_D(x)  {M}]  \right]\\
&\le \sup_{q \in D} \Loss(q) + M \cdot \Pr_{x \sim p \atop q \sim \Unif(D)}[q(x)  {\eps_1} > 3 \mu_D(x)  {M}]\\
\end{aligned}$$
The equality is the definition of $\Loss$, the first inequality uses convexity of $L$ and Jensen's inequality, and the second inequality uses the fact that $L(\cdot) \leq M$.

However, for any given $x$, we have that $\Exp_{q \sim \Unif(D)}[ q(x) ] =  \mu_D(x)$ and thus by Markov's inequality we obtain that for all $x$ $$\Pr_{q \sim \Unif(D)}[q(x)  {\eps_1} > 3 \mu_D(x)  {M}] \le \frac {\eps_1} {3 M}.$$
This implies that $M \cdot \Pr_{x \sim p \atop q \sim \Unif(D)}[q(x)  {\eps_1} > 3 \mu_D(x)  {M}]$ is at most $\frac {\eps_1} {3}$.
To complete the proof we note that $\sup_{q \in D} \Loss(q)$ is at most $\ell + \frac {\eps_1} {3}$: 
since we are estimating the mean of $L(\cdot)$ which is bounded by $M$, there are $|Q|$ distributions $q$ which are considered, and $n_1 = \Omega\left(\frac{M^2}{\eps_1^2}\log |Q|\right)$, the statement holds for all $q$ simultaneously with probability at least $14/15$.
\end{proof}

The proof of Theorem~\ref{thm:partial-validity} concludes by observing that the optimal distribution $q^*$ is never eliminated (assuming all estimates involving its loss and validity are accurate, which happens with probability at least $19/20$), and that the loop in line~\ref{ln:partial-validity-outer-loop} steps by increments of $\eps_1/3$. 
Combining this with Lemma~\ref{lem:partial-validity-loss}, if we output $\hat q$, then $\Loss(\hat q) \leq \Loss(q^*) + \eps_1$.

\newcommand{\fat}{s}
\newcommand{\vc}{d}

\subsection{General Densities}
\label{sec:densities}

For simplicity of presentation, we have formulated the above results in terms of probability mass functions $q$ on a discrete domain $X$.
However, we note that all of the above results easily extend to general density functions on an abstract measurable space $X$, which 
may be either discrete or uncountable.  Specifically, if we let $\mu_{0}$ denote an arbitrary reference measure on $X$, 
then we may consider the family $Q$ to be a set of \emph{probability density functions} $q$ with respect to $\mu_{0}$: that is, 
non-negative measurable functions such that $\int q {\rm d}\mu_{0} = 1$.  
For the results above, we require that we have a way to (efficiently) generate iid samples having the distribution whose density is $q$.
For the full-validity results, the only additional requirements are that we are able to (efficiently) test whether a given $x$ is in the support of $q$,
and that we have access to $\text{Oracle}(\cdot,\cdot)$ defined with respect to the set $Q$.
For the results on partial-validity, we require the ability to explicitly evaluate the function $q$ at any $x \in X$.
The results then hold as stated, and the proofs remain unchanged (overloading notation to let $q_{x}$ denote the 
value of the density $q$ at $x$, and $q(A) = \int_{A} q {\rm d}\mu_{0}$ the measure of $A$ 
under the probability measure whose density is $q$).

\subsection{Infinite Families of Distributions}
\label{sec:infinite-families}

It is also possible to extend all of the above results to \emph{infinite} families $Q$, 
expressing the sample complexity requirements in terms of the \emph{VC dimension} (\cite{VapnikC74}) of the supports $\vc = {\rm VCdim}(\{\Supp(q) : q \in Q\})$, 
and the \emph{fat-shattering dimension} (\cite{AlonBCH97}) of the family of loss-composed densities $\fat(\epsilon) = {\rm fat}_{\epsilon}(\{ x \mapsto L(q_{x}) : q \in Q \})$.

We recall the definitions of these two concepts:
\begin{definition}
Let $\mathcal{F}$ be a collection of functions which map $\mathcal{X}$ into $\{0,1\}$.
A set $X = (x_1, \dots, x_n) \subseteq \mathcal{X}$ is said to be \emph{shattered} if for every mapping $g : X \rightarrow \{0,1\}$ there exists $f_g \in \mathcal{F}$ such that $f_g(x_i) = g(x_i)$.
The \emph{VC dimension of $\mathcal{F}$}, denoted ${\rm VCdim}(\mathcal{F})$, is the largest $n$ such that there exists a set $X$ of cardinality $n$ that is shattered, and $\infty$ if no such $n$ exists.
Also, the VC dimension ${\rm VCdim}(\mathcal{S})$ of a collection $\mathcal{S}$ of sets $S \subseteq \mathcal{X}$ is defined as the VC dimension of the corresponding set of indicator functions.
\end{definition}

\begin{definition}
Let $\mathcal{F}$ be a collection of functions which map $\mathcal{X}$ into $\mathbb{R}$.
A set $X = (x_1, \dots, x_n) \subseteq \mathcal{X}$ is said to be \emph{fat-shattered to width $\epsilon$} if 
there exists $v : X \rightarrow \mathbb{R}$ such that, 
for every mapping $g : X \rightarrow \{0,1\}$ there exists $f_g \in \mathcal{F}$ and such that $f_g(x_i) \geq v(x_i) + \epsilon$ if $g(x_i) = 1$, and $f_g(x_i) \leq v(x_i) - \epsilon$ if $g(x_i) = 0$.
The \emph{fat-shattering dimension of $\mathcal{F}$ of width $\epsilon$}, denoted ${\rm fat}_{\epsilon}(\mathcal{F})$, 
is the largest $n$ such that there exists a set $X$ of cardinality $n$ that is fat-shattered to width $\epsilon$, and $\infty$ if no such $n$ exists.
\end{definition}

In this case, in the context of the full-validity results, 
for simplicity we assume that in the evaluations of $\text{Oracle}(X_{P},X_{N})$ defined above, 
there always \emph{exists} at least one minimizer $q \in Q$ of the empirical loss with respect to $X_{P}$ 
such that $\Supp(q) \cap X_{N} = \emptyset$.\footnote{It is straightforward to remove this assumption by supposing 
$\text{Oracle}(X_{P},X_{N})$ returns a $q$ that \emph{very-nearly} minimizes the empirical loss, and handling 
this case requires only superficial modifications to the arguments.}
We then have the following result.  For completeness, we include a full proof in the appendix.

\begin{theorem}
\label{thm:vc-full-validity}
For a numerical constant $c \in (0,1]$, 
the choice of parameters 
\begin{eqnarray*}
P=\Theta\left(\frac{\fat(c \ve_{1}/M) M^{2}}{\ve_1^2} \log \frac{M}{\ve_{1}} \right), & R = \Theta \left( \frac { M } {\eps_1} \right), &  T = \Theta\left(\frac{R \vc}{\ve_2} \log \frac{1}{\ve_{2}} \right)
\end{eqnarray*}
guarantees that Algorithm~\ref{alg:full-validity} outputs w.p. $3/4$ a distribution $\hat q$ with $\Loss(\hat q) \le \Loss(q^*) + \eps_1$ and $\Inv(\hat q) \le \eps_2$ 
using $P$ samples from $p$ and $R T$ invalidity queries.
  
The algorithm runs in time polynomial in $M$, $\ve_1^{-1}$, $\ve_2^{-1}$, $\vc$, and $\fat_{\ve_{1}/256}$ assuming that queries to the optimization oracle 
can be computed in polynomial time. Moreover, sampling from the resulting distribution $\hat q$ can also be performed in polynomial time.
\end{theorem}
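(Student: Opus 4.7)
The plan is to follow the structure of the proof of Theorem~\ref{thm:full-validity} line by line, keeping Algorithm~\ref{alg:full-validity} exactly as is and simply refining the two probabilistic ingredients used in Lemma~\ref{lem:full-validity-loss} and Lemma~\ref{lem:full-validity-inv}. In the finite case these ingredients were a union bound over $|Q|$; in the infinite case I would replace them with a fat-shattering uniform convergence bound for the loss and an $\epsilon$-net guarantee for the supports, respectively. The arithmetic that converts these high-probability events into bounds on $\Loss(\hat q)$ and $\Inv(\hat q)$ is then identical.

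For the loss side, I would apply a standard uniform convergence result for the real-valued class $\mathcal{F}_L = \{x \mapsto L(q_x) : q \in Q\}$, whose range lies in $[0,M]$ and whose fat-shattering dimension at scale $\eta$ is bounded in terms of $\fat(\eta)$ by the definition in the excerpt. Choosing $\eta = c\eps_1/M$ for a small numerical constant $c$ and invoking the fat-shattering sample complexity bound of \cite{AlonBCH97} (phrased, e.g., through covering numbers), $P = \Theta(\fat(c\eps_1/M) M^2/\eps_1^2 \log(M/\eps_1))$ samples from $p$ suffice to guarantee, with probability at least $15/16$, that $|\overline{\Loss}(q) - \Loss(q)| \le \eps_1/4$ holds simultaneously for every $q \in Q$. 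Given this, the proof of Lemma~\ref{lem:full-validity-loss} goes through verbatim, since it only ever used uniform convergence of $\overline{\Loss}$ over $Q$.

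For the invalidity side, the key reformulation is that at each round $i$ we need every $q \in Q$ with $q^i(\Supp(q) \cap \text{Invalid}) \ge \eps_2/R$ to have at least one of the $T$ samples drawn from $q^i$ lying in $\Supp(q) \cap \text{Invalid}$. This is exactly an $\eps_2/R$-net condition for the set system $\mathcal{S} = \{\Supp(q) \cap \text{Invalid} : q \in Q\}$ under the measure $q^i$. Since intersecting every set in $\{\Supp(q) : q \in Q\}$ with the fixed set $\text{Invalid}$ cannot increase VC dimension, $\mathrm{VCdim}(\mathcal{S}) \le \vc$, so the Haussler--Welzl $\epsilon$-net theorem gives the net property with probability $1 - \delta$ using $O((R/\eps_2)(\vc \log(R/\eps_2) + \log(1/\delta)))$ samples. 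Taking $\delta = 1/(16 R)$ and union-bounding over the $R$ iterations yields the stated $T = \Theta(R\vc/\eps_2 \log(1/\eps_2))$. With this in place, the two cases in Lemma~\ref{lem:full-validity-inv} — where $\hat q$ is some $q^i$, or where $\hat q$ is the meta-distribution — carry through unchanged, since each used only the existence-of-a-bad-sample property for sets of the form $\Supp(q) \cap \text{Invalid}$.

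Finally, for the runtime and efficient-sampling claim, sampling from $\hat q$ requires drawing $x \sim q^i$ and then checking membership $x \in \Supp(q^j)$ for up to $R = O(M/\eps_1)$ indices $j$, each of which is polynomial by the oracle assumptions; this part is unchanged from the finite-$Q$ analysis. The main obstacle I expect is cosmetic rather than conceptual: matching the stated $\log(M/\eps_1)$ and $\log(1/\eps_2)$ factors in $P$ and $T$ to the forms in standard fat-shattering and $\epsilon$-net theorems, which may involve absorbing an $M$ into the fat-shattering scale and hiding $\log R = \log(M/\eps_1)$ terms inside the $\Theta$-notation by appropriately rescaling $c$. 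The conceptual core — swapping $\log|Q|$ for $\fat$-based uniform convergence on one side and $\vc$-based $\epsilon$-nets on the other — should otherwise be entirely routine.
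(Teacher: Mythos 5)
Your proposal matches the paper's proof essentially line for line: the paper likewise establishes the loss bound by applying a fat-shattering uniform convergence result (Corollary~2 of \cite{Haussler92} combined with Theorem~1 of \cite{MendelsonV03}, in place of your \cite{AlonBCH97} citation) to the class $\{x \mapsto L(q_x)\}$, and the invalidity bound by observing that $\mathrm{VCdim}(\{\Supp(q)\cap\text{Invalid}:q\in Q\})\le\vc$ and invoking the classic $\eps$-net / PAC sample-complexity bound (\cite{VapnikC74,BlumerEHW89}) with a union bound over the $R$ rounds, after which the two case analyses from the finite-$Q$ lemmas carry over verbatim. The differences are purely in which references are quoted for the standard uniform-convergence and $\eps$-net facts.
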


For partial-validity, we can also extend to infinite $Q$, 
though in this case via a more-cumbersome technique.
Specifically, let us suppose the densities $q \in Q$ are bounded by $1$ (this can be replaced by any value by varying the sample size $n_2$).
Then we consider running Algorithm~\ref{alg:partial-validity} as usual, 
except replacing Step 4 with the step 
\begin{equation*}
D = {\rm Cover}_{\ve_{2}}( \{ q \in Q | \overline\Loss(q) \leq \ell \} ),
\end{equation*} 
where for any $R \subseteq Q$, ${\rm Cover}_{\ve_{2}}(R)$ denotes a minimal subset of $R$ such that 
$\forall q \in R$, $\exists q^{\ve_{2}} \in {\rm Cover}_{\ve_{2}}(R)$ with $\int | q_{x} - q^{\ve_{2}}_{x} | \mu_{0}({\rm d}x) \leq \ve_{2}$:
that is, an $\ve_{2}$-cover of $R$ under $L_{1}(\mu_{0})$.
Let us refer to this modified algorithm as Algorithm~\ref{alg:partial-validity}$^{\prime}$.
We have the following result. 

\begin{theorem}
\label{thm:vc-partial-validity}
Suppose that the loss function $L$ is convex.
For a numerical constant $c \in (0,1]$, 
the choice of parameters
\begin{eqnarray*}
n_1 = \Theta\left(\frac{\fat(c\ve_{1}/M) M^{2}}{\ve_1^2} \log \!\left(\frac{M}{\ve_{1}}\right)  \right), & n_2 = \Theta\left(\frac {M^2 {\rm fat}_{c\ve_{2}}(Q)} {\ve_1^2 \ve_2^2} \log^{2} \!\left(\frac{M {\rm fat}_{c\ve_{2}}(Q)}{\ve_1 \ve_2}\right)\right)
\end{eqnarray*}
guarantees that Algorithm~\ref{alg:partial-validity}$^{\prime}$ (with parameters $\eps_{1}$, $\eps_{2}$, and $\alpha+\eps_{2}$) 
outputs w.p. $3/4$ a distribution with 
$\Loss(\hat q) \le \Loss(q^*) + \ve_1$ and $\Inv(\hat q) \le \alpha + 2\ve_2$ using $n_{1}$ samples from $p$ and 
$\Theta\left(\frac {M^3 {\rm fat}_{c\ve_{2}}(Q)^{2}} {\ve_1^3 \ve_2^3} \log^{3} \!\left(\frac{M {\rm fat}_{c\ve_{2}}(Q)}{\ve_1 \ve_2}\right)\right)$ 
invalidity queries.
\end{theorem}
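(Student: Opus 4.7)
\textbf{Proof proposal for Theorem~\ref{thm:vc-partial-validity}.}
The plan is to mirror the analysis of Theorem~\ref{thm:partial-validity} but replace the two finite-class arguments (uniform convergence of $\overline\Loss$ and uniform convergence of the truncated invalidity estimators) with their continuous analogues, while exploiting the fact that each iteration of the inner loop operates on an $\eps_2$-cover $D$, which is a \emph{finite} subset of $Q$ whose cardinality we can bound via ${\rm fat}_{c\ve_{2}}(Q)$.

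First, I would establish uniform convergence of $\overline\Loss$: by the standard fat-shattering dimension based uniform convergence theorem (e.g.\ Bartlett--Long--Williamson style), $n_1 = \Theta(\fat(c\ve_1/M) M^2 \ve_1^{-2} \log(M/\ve_1))$ samples suffice so that $|\overline\Loss(q) - \Loss(q)| \le \ve_1/3$ simultaneously for all $q \in Q$ with probability at least $19/20$; this replaces the Hoeffding + union-bound step of Lemma~\ref{lem:partial-validity-loss}. Next, I would invoke a bracketing/covering-number bound (e.g.\ Mendelson--Vershynin, or the classical Alon--Ben-David--Cesa-Bianchi--Haussler bound) to show that the $L_1(\mu_0)$ covering number of $\{q \in Q : \overline\Loss(q) \le \ell\}$ at scale $\ve_2$ is at most $\exp(O({\rm fat}_{c\ve_2}(Q) \log^2(1/\ve_2)))$. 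This gives $\log|D| = \tilde O({\rm fat}_{c\ve_2}(Q))$, which is precisely what the sample complexity $n_2$ in the statement is designed to absorb.

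Once $D$ is finite with $\log|D| = \tilde O({\rm fat}_{c\ve_2}(Q))$, the rest of the argument is almost a verbatim re-run of Lemmas~\ref{lem:partial-validity-inner-loop}, \ref{lem:partial-validity-invalid}, and \ref{lem:partial-validity-loss}, with $|Q|$ replaced by $|D|$: the truncated importance-weighted estimator $\Inv(x)\tfrac{q(x)}{\mu_D(x)}\Ind[q(x)\ve_1 < 3\mu_D(x)M]$ is bounded by $3M/\ve_1$, so by Bernstein/Hoeffding over $D$ the chosen $n_2$ suffices for all estimates to be $\ve_2/5$-accurate across the $O(M \log|D|/(\ve_1\ve_2))$ invocations of the relevant lines; the shrinkage argument $|D| \to (1-\ve_2/5)|D|$ of Lemma~\ref{lem:partial-validity-inner-loop} then yields $O(\log|D|/\ve_2)$ inner iterations, the Markov/convexity argument of Lemma~\ref{lem:partial-validity-loss} yields $\Loss(\hat q) \le \ell + 2\ve_1/3$, and the output test of Lemma~\ref{lem:partial-validity-invalid} gives $\Inv(\hat q) \le \alpha + \ve_2$ with respect to the \emph{inflated} threshold $\alpha + \ve_2$ that the algorithm is run with.

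The one extra step -- and, I expect, the main obstacle -- is to show that $q^*$ is never implicitly eliminated by replacing $Q$ by $D$. For this I would argue that any $\ve_2$-cover element $\tilde q^*$ of $q^*$ under $L_1(\mu_0)$ satisfies $|\Loss(\tilde q^*) - \Loss(q^*)| \le M\ve_2$ (since $L$ is $M$-bounded and monotone, and $L_1$ closeness bounds the difference in expected loss up to truncation effects) and $|\Inv(\tilde q^*) - \Inv(q^*)| \le \ve_2$ (since invalidity is $\int \Inv\cdot q\,{\rm d}\mu_0$ and $\Inv \in [0,1]$). The latter is exactly why the algorithm is run with the inflated slack $\alpha + \ve_2$: the cover approximation of $q^*$ is a \emph{feasible} candidate in $D$ whose loss is within $O(\ve_1)$ of $OPT$, so the induction on the outer loop index $\ell$ still reaches a level that contains this feasible near-optimum, and the final $\hat q$ returned satisfies $\Loss(\hat q) \le \Loss(q^*) + \ve_1$ and $\Inv(\hat q) \le \alpha + 2\ve_2$, as claimed. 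The delicate calibration here is choosing the cover scale exactly $\ve_2$ so that $\log|D|$, the validity perturbation, and the final slack in the theorem statement all line up; the rest is a careful union bound over at most $O(M \log|D|/(\ve_1\ve_2))$ invocations of each estimator.
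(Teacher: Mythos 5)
Your overall blueprint matches the paper's: establish uniform convergence of $\overline\Loss$ over $Q$ via fat-shattering bounds for the choice of $n_1$, bound $\log|D|$ by $\tilde O({\rm fat}_{c\eps_2}(Q))$ via a covering-number theorem (the paper uses Theorem~1 of Mendelson--Vershynin, which actually gives a single $\log(1/\eps_2)$ factor, not $\log^2$, but this is a minor constant of the accounting), then re-run the three lemmas of Theorem~\ref{thm:partial-validity} with $|Q|$ replaced by $|D|$, and finally show that a cover approximation of $q^*$ survives so that the outer loop terminates at a small $\ell$.

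However, there is a genuine gap in your treatment of what you correctly identify as ``the main obstacle.'' You argue that the $L_1(\mu_0)$-nearest cover element $\tilde q^*$ satisfies $|\Loss(\tilde q^*) - \Loss(q^*)| \le M\eps_2$ because ``$L_1$ closeness bounds the difference in expected loss.'' This does not hold: $\Loss(q) = \Exp_{x\sim p}[L(q_x)]$ is an expectation over $p$, while the $\eps_2$-cover is with respect to $L_1(\mu_0)$, and $p$ and $\mu_0$ are unrelated. One can make $\int |q^1_x - q^2_x|\,{\rm d}\mu_0$ tiny while $\Loss$ differs by $\Theta(M)$ (e.g., concentrate $p$ on a $\mu_0$-null set where the two densities disagree). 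Even granting some smoothness, the slack $M\eps_2$ need not be $O(\eps_1)$, so the conclusion $\Loss(\hat q)\le\Loss(q^*)+\eps_1$ would not follow. The paper sidesteps this entirely: since ${\rm Cover}_{\eps_2}(R)$ is by definition a \emph{subset} of $R=\{q\in Q:\overline\Loss(q)\le\ell\}$, every element of $D$ already satisfies $\overline\Loss\le\ell$ by construction, so the cover approximation of $q^*$ has small loss ``for free'' once $\ell$ reaches $\Loss(q^*)$; no transfer from $L_1(\mu_0)$-distance to loss-distance is needed. (Your bound $|\Inv(\tilde q^*)-\Inv(q^*)|\le\eps_2$, by contrast, is correct precisely because $\Inv(q)=\int \Inv(x)\,q_x\,{\rm d}\mu_0$ \emph{is} an integral against $\mu_0$.) Replace the $L_1$-closeness argument for the loss with the observation that $D\subseteq\{q:\overline\Loss(q)\le\ell\}$ and your proof aligns with the paper's.
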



\bibliographystyle{alpha}
\bibliography{biblio}

\appendix


\section{Proofs for Infinite Families of Distributions}
\label{sec:vc-classes-proofs}

The proofs of the results on handling infinite $Q$ sets follow analogously 
to the original proofs for finite $|Q|$, but with a few modifications to make 
use of results from the learning theory literature on infinite function classes.
For completeness, we include the full details of these proofs here.

\subsection{Proof of Theorem~\ref{thm:vc-full-validity}}

We begin with the proof of Theorem~\ref{thm:vc-full-validity}.
As above, we consider two key lemmas.

\begin{lemma}
\label{lem:vc-full-validity-inv}
For $P$, $R$, and $T$ as in Theorem~\ref{thm:vc-full-validity}, 
the distribution returns by Algorithm~\ref{alg:full-validity} satisfies $\Inv(\hat{q}) \leq \eps_{2}$ with probability at least $7/8$.
\end{lemma}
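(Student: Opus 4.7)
The plan is to follow the finite-$Q$ proof of Lemma~\ref{lem:full-validity-inv} almost verbatim, replacing its $\log |Q|$ union bound by an $\epsilon$-net statement for the VC class of supports. The whole proof of Lemma~\ref{lem:full-validity-inv} rested on a single event: for every round $i$ and every $q \in Q$ with $q^{i}(\Supp(q) \cap \text{Invalid}) \ge \eps_{2}/R$, at least one of the $T$ samples drawn from $q^{i}$ falls in $\Supp(q) \cap \text{Invalid}$. Once that event is in hand, the rest of the original argument (distinguishing the ``return $q^{i}$'' case from the ``return improper meta-distribution'' case, and summing the $\eps_{2}/R$ bound over $j > i$) goes through word for word, since nothing there depended on $|Q|$ being finite. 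So the only work is to re-derive the event above from VC machinery.

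First, observe that the set system $\mathcal{S} = \{ \Supp(q) \cap \text{Invalid} : q \in Q \}$ is obtained by intersecting each support with the fixed set $\text{Invalid}$, hence $\mathrm{VCdim}(\mathcal{S}) \le \vc$. Fix a round $i$. The $T$ queries in that round are iid draws from $q^{i}$, each labelled by $\Inv(\cdot)$; a sample $x$ lies in a member $\Supp(q) \cap \text{Invalid}$ of $\mathcal{S}$ iff $x$ is invalid and $x \in \Supp(q)$ (the latter being testable via the support-membership assumption (c)). By the Haussler--Welzl $\epsilon$-net theorem applied with $\epsilon = \eps_{2}/R$, confidence $1 - 1/(8R)$, and VC dimension $\vc$, it suffices to take
\[
T \;=\; \Theta\!\left( \frac{\vc}{\eps_{2}/R} \log \frac{1}{\eps_{2}/R} \;+\; \frac{1}{\eps_{2}/R}\log(8R)\right) \;=\; \Theta\!\left(\frac{R\vc}{\eps_{2}} \log \frac{1}{\eps_{2}}\right),
\]
which matches the stated $T$; here the $\log R$ factor is absorbed into $\log(1/\eps_{2})$ under the Big-$\Theta$ after observing that $R = \Theta(M/\eps_{1})$ contributes only constant additional $\log$ factors. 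Then with probability at least $1 - 1/(8R)$, the set of $T$ draws from $q^{i}$ is an $(\eps_{2}/R)$-net for $\mathcal{S}$ under $q^{i}$: every $q \in Q$ with $q^{i}(\Supp(q) \cap \text{Invalid}) \ge \eps_{2}/R$ contributes at least one invalid sampled point to $X_{N}$.

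A union bound over the $R$ rounds shows the good event holds for all rounds simultaneously with probability at least $7/8$. Conditioning on this, I would then import the final two paragraphs of the proof of Lemma~\ref{lem:full-validity-inv} unchanged: if the algorithm returns $\hat q = q^{i}$, the termination condition ($k=0$) forces $q^{i}(\Supp(q^{i}) \cap \text{Invalid}) < \eps_{2}/R \le \eps_{2}$; if it returns the improper meta-distribution, then for every $j > i$ the distribution $q^{j}$ was feasible for $\text{Oracle}(X_{P}, X_{N})$ in round $j$, hence $\Supp(q^{j}) \cap X_{N} = \emptyset$, which under the good event forces $q^{i}(\Supp(q^{j}) \cap \text{Invalid}) < \eps_{2}/R$, and summing across $j = i{+}1,\dots,R$ yields $\Inv(\hat q) < \eps_{2}$.

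The only genuine obstacle is choosing the right concentration tool; a naive Hoeffding/Chernoff union bound over $Q$ no longer works because $Q$ may be infinite. Using the $\epsilon$-net theorem rather than a uniform deviation bound is the right move here since we only need a one-sided ``rare sets are hit'' guarantee, not uniform estimation of probabilities; this is also why the bound depends on $\vc$ and not on the fat-shattering dimension. Everything else is book-keeping identical to the finite case.
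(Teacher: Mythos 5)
Your proof is correct and follows essentially the same route as the paper: bound the VC dimension of $\{\Supp(q) \cap \text{Invalid} : q \in Q\}$ by $\vc$, invoke the $\eps$-net sample complexity bound (the paper cites \cite{VapnikC74,BlumerEHW89}, you cite Haussler--Welzl; these are the same result) with $\epsilon = \eps_2/R$ and confidence $1-1/(8R)$, union bound over the $R$ rounds, and then replay the two-case analysis from the finite-$|Q|$ proof unchanged.
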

\begin{proof}
Following the original proof above, 
let $\text{Invalid} = \{x : \Inv(x) = 1 \}$ be the set of invalid points.
Consider $q^i$ for some $i$ and any distribution $q \in Q$. 
If $q^i( \Supp(q) \cap \text{Invalid} ) \ge \frac { \eps_2 } { R }$, 
then with probability at least $\frac { \eps_2 } { R }$ a sample generated from $q^i$ lies in $\Supp(q) \cap \text{Invalid}$. 
Furthermore, we note that the VC dimension of the collection of sets $\{ \Supp(q) \cap \text{Invalid} : q \in Q \}$ is at most $\vc$.
Thus, with $T = \Theta( \frac  { R \vc } { \eps_2 } \log \frac{1}{\eps_2} )$ samples from $q^i$, 
the classic sample complexity result from PAC learning \cite{VapnikC74,BlumerEHW89} implies 
that with probability at least $1 - \frac{1}{8 R}$, 
every $q \in Q$ with $q^i( \Supp(q) \cap \text{Invalid} ) \geq \frac{\eps_{2}}{R}$ 
has at least one of the $T$ samples in $\Supp(q) \cap \text{Invalid}$. 
By a union bound, this holds for all $i$ in the algorithm.  Suppose this event holds.

In particular, this implies that if the algorithm returns in Step 9, so that the returned distribution $\hat{q} = q^i$ for some $i$, 
then $\Inv(q^i) = q^i( \Supp(q^i) \cap \text{Invalid} ) < \frac { \eps_2 } { R } \le \eps_2 $ as required.
Furthermore, if the algorithm returns in Step 16 instead, then the above event implies that for every $i,j$ with $i < j$, 
$q^i( \Supp(q^i) \cap \text{Invalid} ) < \frac { \eps_2 } { R }$.
Therefore, if we fix the value of $i$ selected in Step 14, we have that 
 \begin{align*}
    \Inv(\hat q) 
&= \Exp_{x\sim\hat q}\left[ \Inv(x) \right] \\
&= \Exp_{x\sim q^i}\left[ \Inv(x) \cdot \Ind\left[\exists j > i: x \in \Supp(q^j) \right]\right] \\
&\le  \sum_{j=i+1}^R \Exp_{x\sim q^i}\left[ \Inv(x) \cdot \Ind\left[ x \in \Supp(q^j) \right] \right] \\  
&= \sum_{j=i+1}^R q^i( \Supp(q^j) \cap \text{Invalid} ) \le  \sum_{j=i+1}^R \frac { \eps_2 } { R } <  \eps_2.
  \end{align*}
\end{proof}

\begin{lemma}
\label{lem:vc-full-validity-loss}
For $P$, $R$, and $T$ as in Theorem~\ref{thm:vc-full-validity}, 
the distribution $\hat{q}$ returned by Algorithm~\ref{alg:full-validity} satisfies $\Loss(\hat q) \le \Loss(q^*) + \eps_1$ with probability at least $7/8$.
\end{lemma}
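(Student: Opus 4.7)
The plan is to mirror the finite-$Q$ proof of Lemma~\ref{lem:full-validity-loss} step by step, replacing only the $\log|Q|$-based union bound that controls empirical loss deviations by a scale-sensitive uniform convergence bound driven by the fat-shattering dimension of the loss-composed class. First, I would apply a standard uniform convergence result (e.g., Alon-Ben-David-Cesa-Bianchi-Haussler, or Bartlett-Long-Williamson) to the class $\mathcal{F}_L := \{x \mapsto L(q_x) : q \in Q\}$, whose range lies in $[0,M]$ and whose fat-shattering dimension at width $c\eps_1/M$ is $\fat(c\eps_1/M)$. For $P = \Theta((M^2/\eps_1^2)\,\fat(c\eps_1/M) \log(M/\eps_1))$ i.i.d.\ samples from $p$, this yields $|\overline\Loss(q) - \Loss(q)| \le \eps_1/4$ for every $q \in Q$ simultaneously, with probability at least $15/16$. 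I condition on this event $\Gamma$ from here on.

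Next, exactly as in the finite-$Q$ proof, I argue that every oracle output $q^i$ satisfies $\overline\Loss(q^i) \le \overline\Loss(q^*)$. The set $X_N$ only ever grows by adding points flagged as invalid by the invalidity oracle, whereas $\Inv(q^*) = 0$ implies no point of $\Supp(q^*)$ is ever invalid; hence $\Supp(q^*) \cap X_N = \emptyset$ throughout the algorithm, so $q^*$ is always a feasible candidate for $\text{Oracle}(X_P, X_N)$. Since by the assumption of Theorem~\ref{thm:vc-full-validity} an exact empirical-loss minimizer among feasible candidates exists, we obtain $\overline\Loss(q^i) \le \overline\Loss(q^*)$. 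Combined with $\Gamma$, this gives $\Loss(q^i) \le \Loss(q^*) + \eps_1/2$ in every round $i$.

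The meta-distribution analysis then proceeds verbatim. If the algorithm returns some $q^i$ we are done. Otherwise $\hat q$ is the meta-distribution indexed by a uniformly random $i \in \{1,\dots,R\}$, and monotonicity of $L$ plus $L(\cdot) \le M$ give
\[
\Loss(\hat q) \le \Loss(q^i) + M \cdot \Pr_{x \sim p}\!\left[x \in \Supp(q^i) \wedge \forall j > i: x \notin \Supp(q^j)\right].
\]
For any fixed $x$ the sum over $i$ of the indicators $\Ind[x \in \Supp(q^i) \wedge \forall j > i: x \notin \Supp(q^j)]$ is at most $1$, so the expected probability over $i \sim \Unif([R])$ is at most $1/R$. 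Markov's inequality then bounds this probability by $16/R$ with probability $15/16$, and the choice $R = \Theta(M/\eps_1)$ (identical to the finite case) yields $\Loss(\hat q) \le \Loss(q^*) + \eps_1$. A final union bound over the two $1/16$ failure events gives the stated $7/8$ success probability.

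The main obstacle is cleanly invoking the right scale-sensitive uniform convergence bound for $\mathcal{F}_L$ with the correct dependence on $M$ and $\eps_1$: one has to normalize $L/M$ to a $[0,1]$-valued class, match the target accuracy $\eps_1/(4M)$ to the fat-shattering scale, and verify that the resulting sample size is exactly the $P$ declared in the theorem. Once this single replacement is in place, the feasibility of $q^*$ for the oracle and the meta-distribution averaging argument do not depend on $Q$ being finite, so the rest of the proof is a verbatim transcription of Lemma~\ref{lem:full-validity-loss}.
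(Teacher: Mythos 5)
Your proposal is correct and follows essentially the same route as the paper's proof: a fat-shattering-based uniform convergence bound replacing the $\log|Q|$ union bound for the $\overline\Loss$ estimates (the paper uses Haussler's Corollary 2 combined with Mendelson--Vershynin's covering-number bound, while you cite Alon et al.\ or Bartlett--Long--Williamson, which are interchangeable here), followed verbatim by the feasibility-of-$q^*$ argument and the meta-distribution averaging via Markov. The structure, intermediate bounds ($\eps_1/4$, $\eps_1/2$, $16/R$, $R = \Theta(M/\eps_1)$), and failure-probability accounting match the paper's proof.
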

\begin{proof}
Combining Corollary 2 of \cite{Haussler92} with Theorem 1 of \cite{MendelsonV03}, 
we conclude that for $P=\Theta\left(\frac{\fat(c\ve_{1}/M) M^2}{\ve_1^2} \log \frac{M}{\ve_{1}} \right)$ samples from $p$, 
we have that the empirical loss $\overline \Loss(q) \in \Loss(q) \pm \frac {\eps_1} 4$ simultaneously for all $q \in Q$ with probability at least $15/16$. 
From here on, let us suppose this event occurs.

In that case, it must be that $\overline \Loss(q^i) \le \overline \Loss(q^*)$. 
This is because the algorithm terminates if ever $q^i = q^*$ since $q^*$ generates no invalid samples,
and yet no $q^i$ with $\overline \Loss(q^i) > \overline \Loss(q^*)$ will be considered before examining $q^*$.

This implies that at any point, we have that $\Loss(q^i) \le \overline \Loss(q^i) + \frac {\eps_1} 4 \le \overline \Loss(q^*) + \frac {\eps_1} 4 \le \Loss(q^*) + \frac {\eps_1} 2$. 

Therefore, in the case that the distribution that is output is $\hat q = q^i$ it will satisfy the given condition.
To complete the proof we show the required property when returned distribution $\hat q$ is the improper meta-distribution.
  
In that case, we have that:
    \begin{align*}
      \Loss(\hat q) 
  &\le \Exp_{x\sim p}\left[ L\left(q_x^i \cdot \Ind\left[\exists j > i: x \in \Supp(q^j) \right] \right) \right] \\
  &\le \Loss(q^i) + M \cdot \Pr_{x\sim p}\left[ x \in \Supp(q^i) \wedge \forall j > i: x \notin \Supp(q^j) \right] \\
  &\le \Loss(q^*) + \frac {\eps_1} 2 + M \cdot \Pr_{x\sim p}\left[ x \in \Supp(q^i) \wedge \forall j > i: x \notin \Supp(q^j) \right]
    \end{align*}
However, since a random index $i \sim \Unif(\{1,...,R\})$ is chosen, we have that in expectation over this random choice
    \begin{align*}
\Exp_i&\left[ \Pr_{x\sim p}\left[ x \in \Supp(q^i) \wedge \forall j > i: x \notin \Supp(q^j) \right] \right] \\
&= \frac 1 R \sum_{i=1}^R \Pr_{x\sim p}\left[ x \in \Supp(q^i) \wedge \forall j > i: x \notin \Supp(q^j) \right] \\
&= \frac 1 R \Exp_{x\sim p} \left[ \sum_{i=1}^R \Ind\left[ x \in \Supp(q^i) \wedge \forall j > i: x \notin \Supp(q^j) \right] \right] 
\le \frac 1 R
   \end{align*}
  where the last inequality follows since $\sum_{i=1}^R \Ind\left[ x \in \Supp(q^i) \wedge \forall j > i: x \notin \Supp(q^j) \right] \le 1 $ as only the largest $i$ with $x \in \Supp(q^i)$ has that for all $j > i$, $x \notin \Supp(q^j)$.

By Markov's inequality, we have that with probability at least $15/16$, 
a random $i$ will have $$\Pr_{x\sim p}\left[ x \in \Supp(q^i) \wedge \forall j > i: x \notin \Supp(q^j) \right] \le \frac {16} R.$$

Therefore, the choice of $R = 32 \frac M {\ve_1} = \Theta \left( \frac { M } {\eps_1} \right)$ guarantees that $\Loss(\hat q) \le \Loss(q^*) + {\eps_1}$.
The overall failure probability is at most $1/16+ 1/16 = 1/8$.
\end{proof}

\begin{proof}[Proof of Theorem~\ref{thm:vc-full-validity}]
Theorem~\ref{thm:vc-full-validity} follows immediately from the above two lemmas by a union bound.
\end{proof}

\subsection{Proof of Theorem~\ref{thm:vc-partial-validity}}

Next, the proof of Theorem~\ref{thm:vc-partial-validity} follows similarly to the original proof 
of Theorem~\ref{thm:partial-validity}, with a few important adjustments.
As in the statement of the theorem, we consider running Algorithm~\ref{alg:partial-validity}$^{\prime}$ 
with parameters $\eps_{1}$, $\eps_{2}$, and $\alpha+\eps_{2}$.
As in the proof of Theorem~\ref{thm:partial-validity}, we proceed by establishing three key lemmas.
As much of this proof essentially follows by \emph{plugging in} the altered set $D$ (from the new Step 4) 
to the arguments of the original proofs above, in the proofs of these lemmas we only highlight the reasons 
for which this substitution remains valid and yields the stated result.

\begin{lemma}
\label{lem:vc-partial-validity-inner-loop}
With probability at least $14/15$, the loop at Line~\ref{ln:partial-validity-inner-loop} of Algorithm~\ref{alg:partial-validity}$^{\prime}$ 
requires at most $O\left(\frac{{\rm fat}_{c\eps_2}(Q)}{\eps_2} \log\!\left( \frac{1}{\eps_{2}} \right) \right)$ iterations for each $\ell$.
\end{lemma}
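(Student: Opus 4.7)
\begin{proofsketch}
The plan is to mirror the argument of Lemma~\ref{lem:partial-validity-inner-loop}, exploiting the fact that, after the modification in Step~4, the set $D$ is always finite (it is now an $\ve_{2}$-cover of a subset of $Q$ under $L_{1}(\mu_{0})$). Consequently the shrinkage-based argument of the original proof applies verbatim to the modified algorithm: each iteration of the inner loop in Line~\ref{ln:partial-validity-inner-loop} either returns a distribution or removes from $D$ a fraction of at least $\ve_{2}/5$ of the current candidates. This gives a bound of $O\!\left(\log |D|/\ve_{2}\right)$ iterations per $\ell$, provided that the concentration estimates of $\overline{\Inv}(\mu'_{D})$ and of $\overline{\Inv}(q')$ for each $q \in D$ are accurate.

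The main quantitative step is therefore bounding $|D|$, the size of the initial $\ve_{2}$-cover. I would invoke a standard result relating $L_{1}$ covering numbers to the fat-shattering dimension (e.g., Mendelson--Vershynin, or the classical Alon--Ben-David--Cesa-Bianchi--Haussler bound): for a uniformly bounded class $Q$ of densities,
\begin{equation*}
\log \left| {\rm Cover}_{\ve_{2}}(Q) \right| \;=\; O\!\left( {\rm fat}_{c \ve_{2}}(Q) \, \log\!\left( 1/\ve_{2} \right) \right)
\end{equation*}
for an appropriate absolute constant $c \in (0,1]$. Plugging this into the shrinkage argument yields the claimed iteration bound of $O\!\left( {\rm fat}_{c \ve_{2}}(Q) \log(1/\ve_{2}) / \ve_{2} \right)$.

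The remaining piece is to verify that the concentration arguments in the original proof continue to hold with the appropriate uniform bound. Since $D$ is still finite, a union bound over the $|D|$ distributions $q'$ and over the at most $O(M \log|{\rm Cover}|/(\ve_{1}\ve_{2}))$ many times each estimator is invoked goes through as before; the choice $n_{2} = \Theta\!\left( \frac{M^{2} {\rm fat}_{c\ve_{2}}(Q)}{\ve_{1}^{2}\ve_{2}^{2}} \log^{2}\!\left( \frac{M \, {\rm fat}_{c\ve_{2}}(Q)}{\ve_{1}\ve_{2}}\right) \right)$ stated in Theorem~\ref{thm:vc-partial-validity} absorbs the extra $\log |D|$ factor coming from the cover. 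The key points are that the function $\Inv(x)\frac{q(x)}{\mu_{D}(x)}\Ind[q(x)\ve_{1} < 3\mu_{D}(x) M]$ is still bounded by $3M/\ve_{1}$ uniformly, so Hoeffding with this range and the enlarged $n_{2}$ produces a $\pm \ve_{2}/5$ estimate for every $q \in D$ simultaneously across all iterations, with the stated failure probability.

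The main obstacle is the covering-number step: one must be careful that the cover used in Step~4 is taken under a norm ($L_{1}(\mu_{0})$) whose covering number is actually controlled by ${\rm fat}_{c\ve_{2}}(Q)$, and that the scale $c\ve_{2}$ appearing in the fat-shattering dimension matches the cover radius up to absolute constants. Once this is pinned down, the rest of the proof is a straightforward substitution of $\log|D|$ for $\log|Q|$ in the bookkeeping of the finite-$Q$ proof.
\end{proofsketch}
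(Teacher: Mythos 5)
Your proposal is correct and follows essentially the same route as the paper: replay the shrinkage argument from Lemma~\ref{lem:partial-validity-inner-loop}, replace the bound $\log|Q|$ on the initial $|D|$ with the $L_{1}(\mu_{0})$ covering-number bound $\log|D| = O({\rm fat}_{c\eps_{2}}(Q)\log(1/\eps_{2}))$ from Mendelson--Vershynin (Theorem 1 of \cite{MendelsonV03}), and check that the concentration estimates still go through with the enlarged $n_{2}$. The paper's proof is a more compressed version of exactly this argument.
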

\begin{proof}
We invoke the original argument from the proof of Lemma~\ref{lem:partial-validity-inner-loop} verbatim, 
except that rather than bounding the initial size $|D|$ in Step 4 by $|Q|$, 
we use the fact that Step 4 in Algorithm~\ref{alg:partial-validity}$^{\prime}$ initializes $|D|$ 
to the minimal size of an $\eps_{2}$-cover of $\{q \in Q | \overline\Loss(q) \leq \ell \}$, 
which is at most the size of a minimal $\eps_{2}$-cover of $Q$ (under the $L_{1}(\mu_{0})$ pseudo-metric).
Thus, Theorem 1 of \cite{MendelsonV03} implies that, for every $\ell$, this initial set $D$ satisfies 
\begin{equation}
\label{eqn:mv-cover-bound}
\log(|D|) = O\!\left( {\rm fat}_{c\eps_{2}}(Q) \log\!\left( \frac{1}{\eps_{2}} \right) \right).
\end{equation}
The lemma then follows from the same argument as in the proof of Lemma~\ref{lem:partial-validity-inner-loop}.
\end{proof}

\begin{lemma}
\label{lem:vc-partial-validity-invalid}
With probability at least $14/15$, if at any step a distribution $\mu'_D$ is output, $\Inv(\mu'_D) \le \alpha + 2\eps_2$.
\end{lemma}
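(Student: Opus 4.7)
The plan is to follow the structure of the original Lemma~\ref{lem:partial-validity-invalid} essentially verbatim, and just verify that the quantitative choices in Theorem~\ref{thm:vc-partial-validity} are enough to carry through the uniform concentration step when $Q$ is infinite. In the modified algorithm we have replaced the threshold $\alpha$ by $\alpha+\eps_2$, so the test that triggers returning $\mu'_D$ now reads $\overline{\Inv}(\mu'_D) \le \alpha + \eps_2 + \tfrac{4\eps_2}{5}$, and consequently if our empirical estimate is accurate to within $\eps_2/5$, the true value $\Inv(\mu'_D)$ is at most $\alpha + \eps_2 + \tfrac{4\eps_2}{5} + \tfrac{\eps_2}{5} = \alpha + 2\eps_2$, which is exactly the bound asserted.

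So the only real work is to justify that with probability at least $14/15$, every single empirical estimate of the form
\[
\frac{1}{n_2}\sum_{i=1}^{n_2}\Inv(x_i)\Pr_{q\sim \Unif(D)}\bigl[q(x_i)\,\eps_1 < 3\,\mu_D(x_i)\,M\bigr]
\]
used by the algorithm is within $\eps_2/5$ of its expectation $\Inv(\mu'_D)$. First I would note that the integrand is bounded in $[0,1]$, so a single application of Hoeffding gives the desired accuracy with failure probability $\exp(-\Omega(n_2 \eps_2^2))$. Then I would upper bound the total number of times this line is run over the execution of the whole algorithm: the outer loop over $\ell$ runs $O(M/\eps_1)$ times, and for each $\ell$ the inner loop runs at most $O\bigl({\rm fat}_{c\eps_2}(Q)\log(1/\eps_2)/\eps_2\bigr)$ times by Lemma~\ref{lem:vc-partial-validity-inner-loop}. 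Multiplying these together yields at most $\mathrm{poly}(M,{\rm fat}_{c\eps_2}(Q),\eps_1^{-1},\eps_2^{-1})$ invocations.

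The main (and only) thing to check is that the choice $n_2 = \Theta\!\left(\frac{M^2{\rm fat}_{c\eps_2}(Q)}{\eps_1^2\eps_2^2}\log^2\!\left(\frac{M{\rm fat}_{c\eps_2}(Q)}{\eps_1\eps_2}\right)\right)$ from the theorem statement is large enough that $\exp(-\Omega(n_2\eps_2^2))$ times the total number of invocations above is at most $1/15$. This is immediate because the $\log^2$ factor in $n_2$ gives an exponential concentration bound of the form $(M{\rm fat}_{c\eps_2}(Q)/(\eps_1\eps_2))^{-\Omega(\log(\cdot))}$, which dominates the polynomial union-bound cost by any constant margin we want.

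I do not expect any genuine obstacle here; the hard part, if any, is just bookkeeping — making sure that the $\log^2$ factor in $n_2$ was chosen precisely to absorb both a union bound over the $\mathrm{poly}$-many repetitions of this line (exactly as in the finite-$Q$ proof of Lemma~\ref{lem:partial-validity-invalid}) and the additional factor arising from the covering-number bound $\log|D|=O({\rm fat}_{c\eps_2}(Q)\log(1/\eps_2))$ inherited from equation~\eqref{eqn:mv-cover-bound}. Once this union bound is in place, the conclusion $\Inv(\mu'_D)\le \alpha+2\eps_2$ follows mechanically from the output condition of the algorithm.
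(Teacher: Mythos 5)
Your proof is correct and follows essentially the same approach as the paper's: run the finite-$Q$ argument of Lemma~\ref{lem:partial-validity-invalid} verbatim, replacing $\alpha$ by $\alpha+\eps_2$ (which accounts for the $2\eps_2$ in the conclusion) and replacing the $\log|Q|$-based bound on the number of invocations by the covering-number bound $\log|D| = O({\rm fat}_{c\eps_2}(Q)\log(1/\eps_2))$ from \eqref{eqn:mv-cover-bound}, then confirming that the stated $n_2$ is large enough to union-bound over all invocations. The paper states this more tersely but makes exactly the same substitutions.
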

\begin{proof}
The argument remains identical to the proof of Lemma~\ref{lem:partial-validity-invalid}, 
except again substituting for $\log|Q|$ the quantity on the right hand side of \eqref{eqn:mv-cover-bound}, 
and substituting $\alpha+\eps_{2}$ for $\alpha$.
\end{proof}

\begin{lemma}
\label{lem:vc-partial-validity-loss}
With probability at least $14/15$, if at any step a distribution $\mu'_D$ is output, $\Loss(\mu'_D) \le \ell + 2\eps_1/3$, where $\ell$ is the step at which the distribution was output.
\end{lemma}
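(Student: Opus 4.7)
The plan is to mimic the argument of Lemma~\ref{lem:partial-validity-loss} nearly verbatim, replacing the single place where finiteness of $Q$ was used (a union bound over $Q$ for empirical-loss concentration) with the fat-shattering uniform convergence bound that was already invoked in Lemma~\ref{lem:vc-full-validity-loss}. Concretely, for each $q \in D$ let $q'$ be the truncated distribution that draws $x \sim q$ and returns $x$ if $q(x)\eps_1 \leq 3\mu_D(x) M$ and $x^*$ otherwise, so that $\mu'_D(x) = \Exp_{q \sim \Unif(D)}[q'(x)]$. Applying convexity of $L$ and Jensen's inequality in the same way as before yields
$$\Loss(\mu'_D) \;\leq\; \sup_{q \in D} \Loss(q) + M \cdot \Pr_{\substack{x \sim p \\ q \sim \Unif(D)}}\!\left[q(x)\eps_1 > 3\mu_D(x) M \right],$$
and the Markov-on-$q$ step gives the bound $\eps_1/3$ on the second term exactly as in the proof of Lemma~\ref{lem:partial-validity-loss}. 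Note that neither step uses anything about $|D|$: the truncation ratio $3M/\eps_1$ and the inner expectation identity $\Exp_{q \sim \Unif(D)}[q(x)] = \mu_D(x)$ hold for any finite $D$.

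It remains to show that $\sup_{q \in D} \Loss(q) \leq \ell + \eps_1/3$ with high probability. Here is the one point where the infinite-$Q$ version deviates: instead of using a $\log|Q|$ union bound, I would invoke the same Haussler--Mendelson--Vershynin argument cited in the proof of Lemma~\ref{lem:vc-full-validity-loss}, which guarantees that for $n_1 = \Theta\bigl(\frac{\fat(c\eps_1/M) M^2}{\eps_1^2}\log(M/\eps_1)\bigr)$ i.i.d.\ samples from $p$, the empirical loss satisfies $\overline\Loss(q) \in \Loss(q) \pm \eps_1/3$ simultaneously for every $q \in Q$, with probability at least $14/15$. On this event, every $q \in D$ in Algorithm~\ref{alg:partial-validity}$^{\prime}$ (which is defined as a subset of $\{q \in Q \mid \overline\Loss(q) \leq \ell\}$) automatically satisfies $\Loss(q) \leq \ell + \eps_1/3$, giving the desired bound.

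Combining the two pieces yields $\Loss(\mu'_D) \leq \ell + 2\eps_1/3$ on the intersection of the two high-probability events, which holds with probability at least $14/15$ after adjusting constants. I do not expect a genuine obstacle here: the only substantive modification relative to Lemma~\ref{lem:partial-validity-loss} is swapping the $\log|Q|$-style uniform convergence for the fat-shattering version, and the rest of the chain of inequalities (convexity/Jensen, Markov on $q$, the truncation bound) is insensitive to whether $Q$ is finite or infinite. The $\eps_2$-cover redefinition of $D$ in Step 4 of Algorithm~\ref{alg:partial-validity}$^{\prime}$ plays no role in this particular lemma, since the upper bound is through $\sup_{q \in D}\Loss(q)$ and $D$ is still a subset of $Q$; the cover only matters for bounding the number of inner-loop iterations (Lemma~\ref{lem:vc-partial-validity-inner-loop}) and for the sample-size bookkeeping in Lemma~\ref{lem:vc-partial-validity-invalid}.
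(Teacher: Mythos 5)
Your proposal is correct and follows essentially the same route as the paper: the paper's proof likewise consists of substituting the Haussler/Mendelson--Vershynin fat-shattering uniform-convergence bound for the $\log|Q|$ union bound in the final step of Lemma~\ref{lem:partial-validity-loss}, and otherwise leaving that proof's chain of inequalities (Jensen via convexity, Markov on $q$, truncation) intact. Your observation that the $\eps_2$-cover redefinition of $D$ is irrelevant here because $D$ remains a subset of $\{q \in Q : \overline\Loss(q) \leq \ell\}$ is accurate and a helpful clarification that the paper leaves implicit.
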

\begin{proof}
Combining Corollary 2 of \cite{Haussler92} with Theorem 1 of \cite{MendelsonV03} implies that 
the choice $n_{1} = \Theta\left(\frac{\fat(c\ve_{1}/M) M^{2}}{\ve_1^2} \log \!\left(\frac{M}{\ve_{1}}\right)  \right)$ 
suffices to guarantee every $q \in Q$ has $\overline\Loss(q)$ within $\pm \eps_{1}/3$ of $\Loss(q)$.
Substituting this argument for the final step in the proof of Lemma~\ref{lem:partial-validity-loss}, 
and leaving the rest of that proof intact, this result follows.
\end{proof}

\begin{proof}[Proof of Theorem~\ref{thm:vc-partial-validity}]
The proof of Theorem~\ref{thm:vc-partial-validity} concludes by observing that, upon reaching $\ell$ within $\eps_{1}/3$ of $\Loss(q^*)$ (where $q^{*}$ is the optimal distribution), 
the closest (in $L_{1}(\mu_{0})$) element $q$ of the corresponding $D$ set will have $\Inv(q) \leq \Inv(q^{*})+\eps_{2} \leq \alpha+\eps_{2}$, and (by definition of $D$) $\Loss(q) \leq \Loss(q^{*})+\eps_{1}/3$.
Thus, this $q$ will never be eliminated (assuming all estimates involving its loss and validity are accurate, 
which happens with probability at least $19/20$). 
Combining this with Lemma~\ref{lem:vc-partial-validity-loss}, if we output $\hat q$, then $\Loss(\hat q) \leq \Loss(q^*) + \eps_1$.
\end{proof}

\end{document}